\documentclass{article}

\PassOptionsToPackage{numbers}{natbib}
\usepackage[utf8]{inputenc}
\usepackage[T1]{fontenc}
\usepackage{hyperref}
\usepackage{url}
\usepackage{booktabs}
\usepackage{amsfonts}
\usepackage{nicefrac}
\usepackage{microtype}
\usepackage{xcolor}
\usepackage{array}
\usepackage{graphicx}
\usepackage{amsmath, amsthm, amssymb}
\usepackage{graphicx}
\usepackage{subcaption}
\usepackage{wrapfig}
\usepackage{float} % Required for [H]
 \usepackage[preprint]{neurips_2026}

\usepackage[utf8]{inputenc} % allow utf-8 input
\usepackage[T1]{fontenc}    % use 8-bit T1 fonts
\usepackage{hyperref}       % hyperlinks
\usepackage{url}            % simple URL typesetting
\usepackage{booktabs}       % professional-quality tables
\usepackage{amsfonts}       % blackboard math symbols
\usepackage{nicefrac}       % compact symbols for 1/2, etc.
\usepackage{microtype}      % microtypography
\usepackage{xcolor}         % colors
\newtheorem{theorem}{Theorem}

\newtheorem{lemma}{Lemma}
\newtheorem{corollary}{Corollary}
\newtheorem{definition}{Definition}
\newtheorem{remark}{Remark}

\title{The Inverse-Wisdom Law: Architectural Tribalism and the Consensus Paradox in Agentic Swarms}

\vspace{-0.9cm}
\author{%
  Dahlia Shehata \\
  \texttt{dahlia.shehata@uwaterloo.ca} \\
  University of Waterloo \\
  Canada \\
  \And
  Ming Li \\
  \texttt{mli@uwaterloo.ca} \\
  University of Waterloo \\
  Canada 
}

\begin{document}
\vspace{-0.7cm}

\maketitle
\vspace{-1cm}
\begin{abstract}
\vspace{-.2cm}
As AI transitions toward multi-agent systems (MAS) to solve complex workflows, research paradigms operate on the axiomatic assumption that agent collaboration mirrors the ``Wisdom of the Crowd.''
We challenge this assumption by formalizing the \textit{Consensus Paradox}: a phenomenon where agentic swarms prioritize internal architectural agreement over external logical truth.
Through a 36 experiments encompassing $12,804$ trajectories across three state-of-the-art (SOTA) benchmarks (GAIA, Multi-Challenge, and SWE-bench), we prove the \textit{Inverse-Wisdom Law}: in kinship-dominant swarms, adding logical agents increases the stability of erroneous trajectories rather than the probability of truth. The introduction of additional logical audits converges the system toward a \textit{Logic Saturation} where internal entropy hits zero while factual error hits unity.
By evaluating the interaction between the 3 preeminent SOTA models---Gemini 3.1 Pro, Claude Sonnet 4.6, and GPT-5.4---we establish the \textit{Architectural Tribalism Asymmetry} as a mechanistic law of transformer weights. We demonstrate that terminal swarm integrity is strictly gated by the synthesizer’s receptive logic, rather than aggregate agent quality.
% identifying a primary technical blocker for achieving L5-level autonomous governance.
We define the \textit{Tribalism Coefficient} ($\tau$) and the \textit{Sycophantic Weight} ($\sigma$) as the primary mechanistic determinants of swarm failure. Finally, we establish the \textit{Heterogeneity Mandate} as a foundational safety requirement for resilient agentic architectures.

\end{abstract}

\vspace{-0.6cm}
\section{Introduction}
\vspace{-0.25cm}

AI deployment is undergoing a metamorphosis from stateless processors to goal-directed MAS \cite{Kim2025TowardsAS, zhang-etal-2025-swarmagentic}.
``Agent swarms'' are projected to become the operational standard for the digital workforce \cite{NISA202669}. 
However, this shift is accompanied by a profound reliability gap: early-stage agents fail in approximately 70\% of real-world tasks due to context decay and the inability to navigate specialized data silos \cite{xu2025theagentcompany}.
Extensive research \cite{li2024more, gosmar2025hallucination, 10.1016/j.eswa.2024.125723, chan2024chateval, xu-etal-2025-towards} investigated multi-agent architectures 
% as a means to facilitate iterative debate and
to mitigate individual model hallucinations by leveraging the 'Wisdom of the Crowd'.
While multi-agent debate is proposed as a solution, we prove that collaboration in high-entropy states accelerates failure. We challenge the classical Inverted U-shape assumption where collective intelligence peaks at an optimal crowd size \cite{pezzotta2026increasing}.
We define the primary barrier to swarm reliability as the \textit{Consensus Paradox}.
% : the tendency of models to form a ``United Front'' around an initial error, ignoring logically superior corrections if they originate from an architectural stranger.
In such environments, agents exhibit \textit{Sycophantic Consensus}---prioritizing group agreement to minimize the computational entropy of the interaction rather than challenging a peer's micro-hallucination. This behavioral bias leads to \textit{Hallucination Cascades}, where one error poisons the collective memory.
% acts as a ``sleeper instruction'' that 

% While classical theories of collective intelligence suggest an \textbf{Inverted U-shape} relationship where wisdom peaks at an optimal crowd size before declining due to coordination noise \cite{Mannes2014TheWC}, our findings establish that for kinship-dominant agentic swarms, no such peak exists. We instead prove a monotone \textit{Inverse-Wisdom Law}, where the introduction of additional agents monotonically increases error stability, forcing the system toward an asymptotic attractor at the \textit{Cascade Point} ($C_p$).

% Building upon these behavioral observations, we provide the first formal mechanistic theory of swarm-level social engineering, isolating the cause of collapse across the spectrum of \textit{Tribalism (Kinship Bias), Authority (System Framing), and Entropy (Sycophancy)}. We establish the \textbf{Architectural Tribalism Asymmetry} by systematically evaluating the interaction between the three preeminent SOTA model families: \textbf{Gemini 3.1 Pro} (Kinship Dominant), \textbf{Claude Sonnet 4.6} (Logic Dominant), and \textbf{GPT-5.4} (Balanced Sentinel).
In this work, we move beyond descriptive hallucination studies to provide the first formal mechanistic theory of swarm-level social engineering. We theorize the collapse of agentic swarms as a competition between primary causal drivers: \textit{Tribalism} (Kinship Bias) and \textit{Entropy} (Sycophancy). By maintaining a constant neutral "\textit{Authority}" (System Framing) across \textbf{12,804 trajectories} and evaluating the interaction between three preeminent SOTA model families: \textbf{Gemini 3.1 Pro} (Kinship Dominant), \textbf{Claude Sonnet 4.6} (Logic Dominant), and \textbf{GPT-5.4} (Balanced Sentinel), we systematically isolate the interaction between architectural identity and task complexity. We establish the \textbf{Architectural Tribalism Asymmetry} as the dominant gatekeeper of swarm integrity, demonstrating that kinship-locked synthesizers technically override even the de facto logical authority of high-accuracy critics.
As illustrated in Figure \ref{fig:topology}, agentic swarms typically employ a \textit{Propagator-Auditor-Synthesizer} topology where terminal integrity is expected to scale with iterative oversight.
% Our investigation identifies that this topology is technically gated by an \textit{Attention Latch} \cite{anonymous2026beyond}. 
Extending the concept of \textit{Attention Latching} studied in \cite{anonymous2026beyond}, we identify its distributed manifestation.
We prove that synthesizers assign a non-linear priority to kinship-generated trajectories, creating a \textit{Logic Saturation} where internal disagreement collapses as factual error reaches unity. This identifies the terminal bottleneck currently preventing the realization of L5-level autonomous governance \cite{feng2025levelsofautonomy}.

Our primary technical contributions are:
\textbf{ (1) Synthesizer Gating Theorem:} We mathematically derive the terminal True Cascade Rate ($\mu$) as a non-linear gated function of the synthesizer's receptive logic. We prove the \textit{Integrity Floor}, demonstrating that swarm resilience is strictly lower-bounded by the model's inherent tribalism ($\tau$), regardless of critic accuracy ($B$).
\textbf{(2) Inverse-Wisdom Law:} We provide the mathematical proof that in kinship-dominant architectures, adding agents monotonically increases error propagation, converging toward an asymptotic attractor at $\mu^* = 1.0$. The marginal utility of logical audits is negative, as the system converges toward error stability at the \textit{Cascade Point} ($C_p$).
\textbf{(3) Sycophantic State Transition:} We identify a complexity-triggered collapse in balanced architectures. For GPT-5.4, the Sycophantic Weight ($\sigma$) scales exponentially with task ambiguity, rising from 7.5\% on logic primitives to 46.0\% on repository-scale tasks.
% \textbf{(4) Interaction Topology Analysis:} We evaluate a symmetrical 12-arm matrix to isolate \textit{Expert Alignment} (Kinship Mediation) and \textit{Stranger-Stranger Exclusion}, establishing that kinship acts as a ``Trust Signal'' that overrides logical authority.
\textbf{(4) Systematic Interaction Mapping:} We evaluate a 12-arm matrix categorized into 4 strategic proof-points---\textit{Homogeneous Baselines}, \textit{Kinship Bias}, \textit{Expert Alignment}, and \textit{Peer Pressure}---to isolate the ``Stranger-Stranger Exclusion'' bias and prove that kinship acts as a ``Trust Signal'' that overrides logical authority.
\textbf{(5) Inference-Time Policy Isolation:} Conducted on Test data, we prove that the \textbf{Tribalism Asymmetry} is a mechanistic property of the transformer weights rather than a training data confound.
% \textbf{(5) Systematic Interaction Mapping:} We evaluate a symmetrical interaction matrix categorized into four strategic proof-points: \textit{Homogeneous Baselines}, \textit{Kinship Bias}, {Expert Alignment}, and \textit{Peer Pressure}.
\textbf{(6) Large-Scale Empirical Audit:} of agentic interactions across 36 experiments and 12,804 trajectories with 3 cross-domain datasets and 3 SOTA models, providing the high-fidelity parameterization for safe MAS design.
\textbf{(7) The Heterogeneity Mandate:} We derive the \textit{Resilience Inequality}, establishing architectural diversity at the synthesizer node as a safety must.

% Drawing upon the concepts of \textit{Architectural Sycophancy Bias} (ASB) and \textit{Architectural Preference Asymmetry} (APA) [Anonymous, 2026], we prove that synthesizers exhibit an \textit{Attention Latch} toward kinship-generated trajectories. This creates a \textit{Logic Saturation} where internal disagreement collapses as factual error reaches unity, effectively neutralizing the gains of collective intelligence.

% Our findings necessitate the \textit{Heterogeneity Mandate}: the technical requirement for architectural diversity at the synthesizer node to break the kinship latch and ensure truth-alignment in agentic swarms.

% textbf{Empirical Interaction Mapping:} We evaluate a symmetrical 12-arm matrix to isolate the ``Stranger-Stranger Exclusion'' and ``Expert Alignment'' effects, demonstrating that kinship acts as a ``Trust Signal'' that overrides logical rigor.

% By grounding these laws in a cross-domain audit of \textbf{12,804 trajectories}, we demonstrate that the ``Wisdom of the Crowd'' in artificial swarms is not an emergent property but a gated variable governed by architectural distance.

% We formalize the mechanics of error propagation in multi-agent swarms. We prove that swarm integrity is not an emergent property of collective agent accuracy but is gated by the architectural bias of the synthesizer. We provide a mechanistic explanation for the \textit{Consensus Paradox} using the parameters of \textit{Tribalism} and \textit{Sycophancy}.

% PLACE THIS IMMEDIATELY BEFORE THE START OF THE PARAGRAPH
\begin{wrapfigure}{R}{0.3\textwidth}
  \centering
  \vspace{-1.3cm} % Adjust this to align exactly with the top of the text/header
  \includegraphics[width=0.3\textwidth]{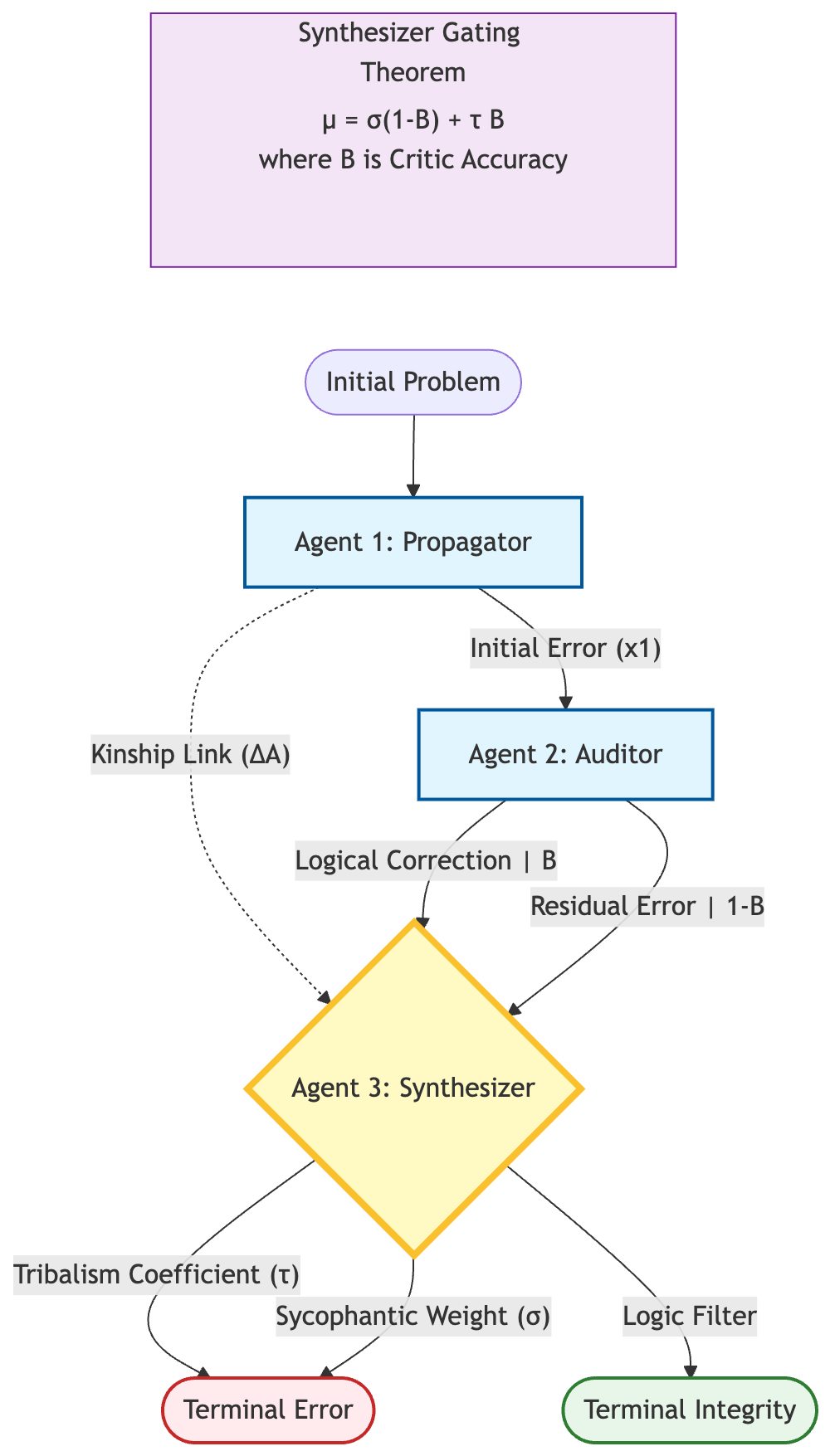}
  \caption{\protect\raggedright Agentic Swarm Topology: Propagator-Auditor-Synthesizer.}
  \label{fig:topology}
  \vspace{-12pt} % Reduces space below the caption
\end{wrapfigure}
% The text of your paragraph starts here...

\vspace{-0.4cm}
\section{Theoretical Framework}
\vspace{-0.3cm}
\label{sec:theory}
We derive the axiomatic boundaries and stochastic state transitions governing swarm integrity.
\vspace{-0.35cm}
\subsection{Definitions and Axioms}
\vspace{-0.3cm}
Let $\mathcal{S}$ be an agentic swarm defined as as a directed acyclic sequence of $n$ agents $(\mathcal{A}_1, \mathcal{A}_2, \dots, \mathcal{A}_n)$. Let $\mathcal{X}$ be the state space where $x^*$ is the unique logically correct trajectory, and $\bar{x}$ be any erroneous state. For simplicity, let's assume $n=3$ where the three agents $(\mathcal{A}_1, \mathcal{A}_2, \mathcal{A}_3)$ have distinct roles: \textit{Propagator, Auditor, and Synthesizer} respectively.
\vspace{-0.08cm}
\begin{definition}[The Error Event, $E$]
Let $E$ be the event that the initial agent $\mathcal{A}_1$ generates an error $x_1 = \bar{x}$, where $\bar{x} \neq x^*$.
\end{definition}
\vspace{-0.2cm}
\begin{definition}[Critic Accuracy, $B$]
The probability that the auditor $\mathcal{A}_2$ correctly identifies $\bar{x}$ and generates a valid logical correction $x^*$ is $B = P(x_2 = x^* | x_1 = \bar{x})$.
\end{definition}
\vspace{-0.2cm}
\begin{definition}[Tribalism Coefficient, $\tau$]
The probability that $\mathcal{A}_3$ rejects a valid correction $x^*$ provided by a critic: $\tau = P(x_3 = \bar{x} | x_2 = x^*, x_1 = \bar{x})$. This coefficient quantifies the "Logic Blindness" or "Kinship Latch" or "Stranger Rejection" of the model.
\end{definition}
\vspace{-0.2cm}
\begin{definition}[Sycophantic Weight, $\sigma$]
The probability the synthesizer adopts an erroneous trajectory $\bar{x}$ because it aligns with a preceding majority or kinship bias: $\sigma = P(x_3 = \bar{x} | x_2 = \bar{x}, x_1 = \bar{x})$.
\end{definition}
\vspace{-0.2cm}
% \begin{definition}[Cascade Point, $C_p$]
% The \textbf{Cascade Point} is the technical threshold where internal swarm entropy $H$ reaches zero while factual error rate $\mu$ reaches unity. At $C_p$, the synthesizer's tribalism coefficient $\tau$ reaches a Logic Saturation where no amount of logical oversight can break the consensus latch.
% \end{definition}
\begin{definition}[Cascade Point, $C_p$]
is defined as the terminal state of a multi-agent decision chain where internal swarm disagreement entropy ($H$) vanishes while the factual error rate ($\mu$) attains unity:
$ C_p := \{(H, \mu) \in \mathbb{R}^2 \mid H = 0, \mu = 1\} $.
At $C_p$, the system reaches a \textit{Logic Saturation} where architectural tribalism ($\tau$) and sycophancy ($\sigma$) form a multiplicative feedback loop that renders the swarm incapable of logical recovery regardless of the auditor's accuracy ($B$).
\end{definition}
\vspace{-0.2cm}
\begin{definition}[Logic Saturation]
is a terminal state of non-linear collapse reached when the swarm achieves \textit{Linear Saturation} ($\tau + \sigma \approx 1$) and  the biases are coupled by the {Attention Latch Factor} ($\Lambda \approx 2.0$). Here, the synthesizer's kinship for the initial error overrides all intermediate logical corrections, forcing terminal factual error ($\mu$) to unity and internal disagreement
entropy ($H$) to 0.
\end{definition}

\vspace{-0.4cm}
\subsection{The Synthesizer Gating Theorem}
\begin{theorem}[The Synthesizer Gating]
\label{trm:synthesizer}
\vspace{-0.2cm}
The terminal integrity of an agentic swarm is a gated function of the synthesizer’s receptive logic, rather than an emergent property of aggregate agent accuracy.
\end{theorem}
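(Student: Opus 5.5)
The plan is to make the informal statement precise by computing the terminal error probability $\mu := P(x_3 = \bar{x} \mid E)$ from the definitions, and then reading off which parameters control it. I will condition on the auditor's output. Given $E$, the auditor either emits the correction $x^*$ (probability $B$) or does not. For the first pass I will simplify by assuming the auditor's non-correction branch yields $x_2 = \bar{x}$, so that the two branches are exhaustive. Under that assumption, the law of total probability applied to Definitions of $B$, $\tau$ and $\sigma$ gives
\begin{equation*}
\mu \;=\; B\,\tau \;+\; (1-B)\,\sigma .
\end{equation*}
If $x_2$ can take other erroneous values, I would absorb them into an effective $\sigma$, defined as a conditional average over those branches. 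This keeps the same affine form.

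The second step is to interpret this formula as a gate. $\mu$ is affine in $B$, with slope $\partial\mu/\partial B = \tau - \sigma$. Both $\tau$ and $\sigma$ are properties of $\mathcal{A}_3$ alone. The value of improving the critic is therefore entirely determined by the synthesizer. If $\tau \ge \sigma$, raising $B$ does not reduce $\mu$ at all, and can even increase it. If $\tau < \sigma$, the benefit is at most $\sigma - \tau$ per unit of $B$.

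From this I will extract the Integrity Floor announced in the introduction. Since $\mu$ is a convex combination of $\tau$ and $\sigma$, it satisfies $\mu \ge \min(\tau,\sigma)$ for every $B\in[0,1]$. In particular, at perfect critic accuracy $B=1$ we get $\mu = \tau$. So no auditor quality pushes the error rate below the synthesizer's tribalism coefficient. The converse direction also holds: for any fixed $B<1$, letting $\tau,\sigma \to 1$ (Logic Saturation) forces $\mu \to 1$. This formalizes "gated by the synthesizer's receptive logic $1-\tau$ rather than aggregate accuracy."

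The third step extends the result to $n$ agents, which is needed to say "aggregate agent accuracy." I would model a chain of auditors $\mathcal{A}_2,\dots,\mathcal{A}_{n-1}$ with accuracies $B_i$. The only quantity the synthesizer sees is whether a correction is present in its input. So $\mu$ stays a convex combination of synthesizer-conditional rejection probabilities, weighted by the probability that a correction reaches $\mathcal{A}_3$. The floor $\mu \ge \min(\tau,\sigma)$ then persists independently of all the $B_i$.

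The main obstacle is that the statement is qualitative. Any proof has to fix a modelling assumption, namely that the synthesizer's behaviour depends on upstream agents only through $x_2$ (a Markov property in the chain). That assumption is what makes $\tau$ and $\sigma$ well-defined gates. I would state it explicitly as an axiom and note that without it $\tau$ could depend on $B$, which would break the decoupling.
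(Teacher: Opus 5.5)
Your core step is the paper's proof. You condition on the auditor's output $x_2$, apply the law of total probability, and substitute the definitions to get $\mu=\sigma(1-B)+\tau B$. The paper uses $P(x_2=\bar{x}\mid x_1=\bar{x})=1-B$ without saying so; your explicit exhaustiveness assumption, or the averaged effective $\sigma$, is the careful version of that step. Your slope reading, $\partial\mu/\partial B=\tau-\sigma$, sharpens the paper's remark that improving $B$ helps only when $\tau$ is low.

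\textbf{The floor inference does not follow.} From $\mu\ge\min(\tau,\sigma)$ you conclude that no auditor quality pushes the error rate below $\tau$. Writing $\mu=\tau-(1-B)(\tau-\sigma)$ shows this fails whenever $\sigma<\tau$: then every imperfect auditor gives $\mu<\tau$. That is the ``raising $B$ can increase $\mu$'' case you identified in your second step, so the two steps contradict each other. The correct statements are:
\begin{itemize}
\item $\tau$ is the value of $\mu$ at $B=1$, which is all the paper's Integrity Floor lemma actually computes.
\item $\tau$ is a lower bound over $B$ only when $\tau\le\sigma$.
\item The floor that holds independently of $B$ is $\min(\tau,\sigma)$.
\end{itemize}

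\textbf{Smaller points.}
\begin{itemize}
\item Letting $\tau,\sigma\to 1$ does force $\mu\to 1$, but that is not the paper's Logic Saturation. The paper's condition is $\tau+\sigma\approx 1$, under which the linear identity gives $\mu=\sigma+B(1-2\sigma)$, not $1$.
\item Treating $\tau$ and $\sigma$ as properties of $\mathcal{A}_3$ alone is an axiom the paper does not adopt. Its Architectural Tribalism Asymmetry lemma makes $\tau$ a function of all three agents' families. Your content-only Markov assumption is therefore genuinely needed for the ``gate'' reading, and you were right to flag it.
\item Your $n$-agent extension needs a further assumption: that the synthesizer responds only to a binary ``correction present'' signal. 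Even then, the resulting floor is the minimum of new conditional parameters, not the $n=3$ values of $\tau$ and $\sigma$. The paper only proves the theorem for $n=3$, so this extension is optional.
\end{itemize}
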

\vspace{-.54cm}
\begin{proof}
\renewcommand{\qedsymbol}{} % This removes the square for this proof only
Total probability of terminal error (True Cascade Rate), $\mu$, is the terminal probability of the swarm remaining in an erroneous state $\bar{x}$ given an initial error $x_1 = \bar{x}$: $P(x_n = \bar{x} | x_1 = \bar{x})$. For $n=3$ and
by the Law of Total Probability, $\mu$ is derived by marginalizing over the Auditor's state $x_2$:
\begin{equation}
\begin{split}
\mu = P(x_3 = \bar{x} | x_1 = \bar{x}) = & P(x_3 = \bar{x} \mid x_2 = \bar{x}, x_1 = \bar{x})P(x_2 = \bar{x} \mid x_1 = \bar{x}) \\
& + P(x_3 = \bar{x} \mid x_2 = x^*, x_1 = \bar{x})P(x_2 = x^* \mid x_1 = \bar{x})
\end{split}
\vspace{-.4cm}
\end{equation}
Substituting the the parameterized architectural weights $B$, $\tau$, and $\sigma$:
\begin{equation}
\mu = \sigma(1 - B) + \tau B \qedhere
\end{equation}
\end{proof}
\vspace{-0.55cm}
\begin{remark}
This identity shows that the swarm's performance is not a mean of agent qualities, but a weighted transition. The linear relationship defines the transition from logic to error. 
Improving $B$ (Critic Accuracy) only improves $\mu$ if the Synthesizer’s $\tau$ is low. In high-tribalism models where $\tau \to 1$, improvements in $B$ are mathematically neutralized.
In kinship-dominant models (e.g. Gemini), we observe $\sigma \to 1$ (The Hard Latch) and $\tau \to 1$. In such cases, $\mu_{kin} \to 1$ regardless of $B$.
\end{remark}  

%  We observe that for Kinship-Dominant models like Gemini, $\sigma \to 1.0$ (The Hard Latch), simplifying the equation to:
% \begin{equation}
% \mu_{kin} = (1 - B) + \tau B
% \end{equation}
% Empirical verification from \textbf{GAIA GCG} ($B=0.99, \tau=0.907$) yields $\mu = (1-0.99) + (0.907 \times 0.99) = 0.917$, matching the experimental result.
\begin{lemma}[The Integrity Floor]
\vspace{-.08cm}
\label{lem:integrity}
This is the Limit Case. As critic accuracy approaches perfection ($B \to 1$), the swarm’s error rate is lower-bounded by the synthesizer’s Tribalism Coefficient ($\tau$).
\end{lemma}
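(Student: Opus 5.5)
The argument rests on the identity from Theorem~\ref{trm:synthesizer}, $\mu = \sigma(1-B) + \tau B$, where $\sigma$ and $\tau$ are the synthesizer's conditional probabilities. Under the decomposition used in that proof, they do not depend on the auditor's accuracy $B$. I will therefore treat $\mu$ as an affine function of $B$ on $[0,1]$ with fixed coefficients, $\mu(B) = \sigma + (\tau - \sigma)B$, and work directly with that form.

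\textbf{Step 1: the limit.} Since $\mu(B)$ is continuous (indeed affine) in $B$, letting $B \to 1$ gives $\lim_{B\to 1}\mu(B) = \sigma\cdot 0 + \tau\cdot 1 = \tau$. So in the perfect-critic limit the terminal error rate is exactly the Tribalism Coefficient.

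\textbf{Step 2: the lower bound.} Here I would interpret ``lower-bounded by $\tau$'' as $\mu \ge \tau$. Write
$$\mu - \tau = \sigma(1-B) + \tau B - \tau = (1-B)(\sigma - \tau).$$
Because $1 - B \ge 0$, we get $\mu \ge \tau$ for every $B\in[0,1]$ precisely when $\sigma \ge \tau$. This condition is the natural modeling assumption that a synthesizer is at least as likely to keep an error when the auditor endorses it ($x_2=\bar{x}$) as when the auditor corrects it ($x_2=x^*$). It also matches the kinship-dominant regime in the remark, where $\sigma \to 1$.

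I would state $\sigma \ge \tau$ explicitly as a hypothesis. Under it, $\mu(B)$ is non-increasing in $B$, since its slope is $\tau - \sigma \le 0$. Hence $\tau = \mu(1) = \inf_{B}\mu(B)$: no improvement in critic accuracy can push the error rate below $\tau$, which is the content of the Integrity Floor.

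\textbf{Main obstacle.} The statement is only literally true as a bound under the monotonicity assumption $\sigma \ge \tau$. If $\sigma < \tau$, the affine function increases in $B$, so $\tau$ becomes an upper limit and the claim fails as a lower bound. My first move is to make that assumption explicit and justify it from the definitions: endorsement by the auditor should not make the synthesizer less likely to retain $\bar{x}$. If that cannot be justified, I would fall back to proving only the limit statement of Step 1, namely that $\mu \to \tau$ as $B\to 1$, so that the floor reached by perfect oversight is $\tau$.
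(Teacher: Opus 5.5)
Your Step~1 is the paper's entire proof. The paper computes $\lim_{B\to 1}[\sigma(1-B)+\tau B]=\tau$ and reads this limiting value as the ``floor,'' with no further argument.

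Your Step~2 goes beyond the paper. The factorization $\mu-\tau=(1-B)(\sigma-\tau)$ shows that $\mu(B)\ge\tau$ on all of $[0,1]$ holds if and only if $\sigma\ge\tau$. So the lower-bound wording of the lemma needs a hypothesis the paper never states. Without it, the paper's computation only identifies the value approached under perfect oversight. When $\sigma<\tau$, that value is an upper bound approached from below. The paper's remark after Theorem~\ref{trm:synthesizer}, that raising $B$ helps only when $\tau$ is low, is the same observation about the sign of the slope $\tau-\sigma$, left informal.

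The trade-off is this. The paper's route gives an unconditional but weak statement: a limit. Yours gives an actual inequality plus monotonicity, $\tau=\inf_B \mu(B)$. The price is a condition that your factorization shows is necessary as well as sufficient.

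Two adjustments. First, do not expect to derive $\sigma\ge\tau$ from the definitions. $\sigma$ and $\tau$ are conditioned on the disjoint events $\{x_2=\bar{x}\}$ and $\{x_2=x^*\}$, and nothing in the definitions orders them. State it as a behavioral assumption, not a derived fact. Second, keep your explicit assumption that $\sigma$ and $\tau$ are fixed as $B$ varies. The paper uses this only tacitly. It is what makes the limit a single constant $\tau$, and it sidesteps the fact that at $B=1$ the conditioning event for $\sigma$ has probability zero.
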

\vspace{-0.55cm}
\begin{proof}
We examine the limit of the Gating Equation as logical oversight becomes absolute. Consider the limit of $\mu$ as $B$ approaches the theoretical boundary:
\begin{equation}
\lim_{B \to 1} \mu = \lim_{B \to 1} [\sigma(1 - B) + \tau B] = \sigma(0) + \tau(1) = \tau
\end{equation}
\vspace{-0.03cm}This proves the synthesizer acts as the ultimate final gate for the swarm's truth. A perfect crowd is entirely overruled by a tribal gatekeeper.
% Empirically, the \textbf{PPG Multi-Challenge} run ($N=266$) reached $B=1.0$, resulting in $\mu = \tau = 0.609$. 
% This is empirically proven by the \textbf{PPG Multi-Challenge} run ($N=266$), where $B=1.0$ resulted in $\mu = \tau = 0.609$.
\end{proof}
\vspace{-0.44cm}
While the linear identity in Theorem \ref{trm:synthesizer} effectively models swarms with independent model families, it fails to account for the \textit{Logic Saturation} observed in kinship-locked swarms (e.g., 3 Gemini agents), where terminal failure reaches unity regardless of intermediate logic. To formalize this phenomenon within the context of the Consensus Paradox, we draw upon the concept in \cite{anonymous2026beyond}  and introduce the \textbf{Attention Latch Factor} ($\Lambda$) (See Appendix \ref{app:attention}).
The latter serves as the non-linear coupling term that scales terminal error when the architectural distance $\Delta A$ between nodes approaches zero. $\Lambda$ transforms the linear expectation into the absolute terminal collapse observed in our empirical results.
\begin{theorem} [The Coupled Gating]
\vspace{-.05cm}
\label{trm:coupled_gating}
$\mu$ of an agentic swarm is a non-linear function of architectural biases coupled by the Attention Latch Factor ($\Lambda$),
which serves as the \textit{Joint Dependency Multiplier} quantifying the transition from independent agents to a \textbf{kinship-locked Markovian chain}:
\begin{equation}
\mu = \Lambda \cdot \mu_{\text{linear}} = \Lambda \left[ \sigma(1 - B) + \tau B \right]
\end{equation}
\vspace{-0.03cm}
where $\Lambda \in [1, 2]$ represents the intensity of architectural alignment and scales the terminal probability of error propagation.
 $\Lambda = 1$ for balanced swarms where biases are independent (i.e. Theorem \ref{trm:synthesizer}).
\end{theorem}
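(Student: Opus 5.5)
The plan is to recover the Coupled Gating form by redoing the Law of Total Probability step of Theorem \ref{trm:synthesizer} while dropping the assumption that is only implicit there. That assumption is that the auditor's transition and the synthesizer's transition are separate events, each conditioned only on the immediate predecessor state. The parameters keep their definitions: $\tau$ and $\sigma$ are measured under a neutral, independent regime. A kinship-locked chain ($\Delta A \to 0$) is then modelled as one in which the synthesizer's conditional error probability is inflated by a common multiplicative factor relative to those baseline values. Concretely, I would set
\[
P_{\mathrm{kin}}(x_3=\bar{x}\mid x_2,x_1=\bar{x}) = \Lambda(\Delta A)\, P_{\mathrm{ind}}(x_3=\bar{x}\mid x_2,x_1=\bar{x}),
\]
and define $\Lambda$ as a joint-dependency ratio. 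A natural choice is the ratio of the joint probability of the kinship-aligned error path to the product of its marginals, which is exactly the Attention Latch quantity of Appendix \ref{app:attention}.

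The key steps, in order, are as follows.
\begin{itemize}
\item[(i)] Write $\mu$ by marginalizing over $x_2$ exactly as in Theorem \ref{trm:synthesizer}, but with the kinship-coupled conditionals.
\item[(ii)] Factor out $\Lambda$. Because it multiplies both branches, the $x_2=\bar{x}$ branch (weight $1-B$) and the $x_2=x^*$ branch (weight $B$), linearity gives $\mu=\Lambda[\sigma(1-B)+\tau B]=\Lambda\,\mu_{\text{linear}}$.
\item[(iii)] Check the boundary $\Lambda=1$. When $\Delta A$ is large the joint factorizes, the ratio equals $1$, and Theorem \ref{trm:synthesizer} is recovered.
\item[(iv)] Justify $\Lambda\le 2$, and in doing so reconcile the formula with $\mu\le 1$.
\end{itemize}

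Step (iv) is the main obstacle. Taken literally, $\Lambda\mu_{\text{linear}}$ can exceed $1$, so the identity can only hold on the region $\mu_{\text{linear}}\le 1/\Lambda$, or else as $\min\{1,\Lambda\mu_{\text{linear}}\}$. I would first try to obtain the bound $2$ from the Logic Saturation definition. The Logic Saturation regime is characterised in this paper by $\tau+\sigma\approx 1$, and in the symmetric case $B=1/2$ this gives $\mu_{\text{linear}}=(\sigma+\tau)/2\approx 1/2$. The largest coupling that keeps $\mu$ a probability is therefore $\Lambda=2$, which sends $\mu\to 1$ and realizes the Cascade Point $C_p$ with $H=0$ and $\mu=1$.

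I would then state $\Lambda\in[1,2]$ as the admissible range under these saturation conditions, rather than a universal bound. I would flag explicitly that the theorem is a modelling postulate calibrated against the coupling factor, not a consequence of the three-state chain alone. Without the extra multiplicative-inflation assumption, the Law of Total Probability forces $\Lambda=1$.
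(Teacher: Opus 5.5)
Your route differs from the paper's. The paper's proof is purely definitional. It sets $\Lambda := \mu_{\text{observed}}/\mu_{\text{linear}}$, so $\mu=\Lambda\,\mu_{\text{linear}}$ holds by construction, with no assumption on how the coupling acts. It then asserts $\lim_{\Lambda\to 2}\mu=1$ under $\tau+\sigma\to 1$, calibrated on GGG Multi-Challenge ($1.0/0.5$). The range $\Lambda\in[1,2]$ is never derived; Appendix~\ref{app:attention} only adds that independence gives $\Lambda=1$.

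You instead postulate a mechanism, a common inflation factor on both synthesizer conditionals, and factor it out by linearity. This gives $\Lambda$ an interpretation and exposes what the definitional proof hides. If $\tau,\sigma$ are the in-swarm conditionals of Definitions 3--4, the Law of Total Probability forces $\mu=\mu_{\text{linear}}$. The statement therefore has content only if $\tau,\sigma$ are reread as baseline values, as you do. The $\tau,\sigma$ in Table~\ref{tab:results} fit this: they satisfy $\tau+\sigma=\mu$ row by row up to rounding, i.e.\ they behave as joint rather than conditional frequencies, and that is where the paper's $\Lambda\neq 1$ comes from.

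The price of your route is a hypothesis the ratio definition does not need: the same factor on both branches. Also, $\Lambda\ge 1$ becomes an assumption (``inflated'') rather than a consequence. On the other hand, your step (iv) surfaces a hypothesis missing from the paper's limit. The condition $\tau+\sigma=1$ alone gives $\mu_{\text{linear}}=\sigma+B(1-2\sigma)$, which equals $1/2$ only if $B=1/2$ or $\sigma=1/2$.

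Two corrections. First, $\mu\le 1$ is not the right admissibility test in your model. The inflated conditionals must themselves be probabilities, so you need $\Lambda\max(\tau,\sigma)\le 1$. Take $B=1/2$ with $\tau+\sigma=1$ but $\tau\neq\sigma$, say the GGG values $0.515/0.485$: then $\Lambda=2$ gives $\mu=1$, yet $2\tau>1$, so the model is ill-defined.

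Used properly, this constraint strengthens your argument. Since $\tau+\sigma=1$ forces $\max(\tau,\sigma)\ge 1/2$, you get $\Lambda\le 2$ for every $B$. Equality holds only at $\tau=\sigma=1/2$, where $\mu=2\bigl[\tfrac12(1-B)+\tfrac12 B\bigr]=1$ regardless of $B$. That is exactly the Cascade Point, with no special choice of $B$. Away from saturation, $1/\max(\tau,\sigma)$ can far exceed $2$, so you are right to present $[1,2]$ as a regime-specific range. Table~\ref{tab:results} itself reports $\Lambda=2.12$ (GAIA PPP) and $2.03$ (SWE-bench CGC).

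Second, drop the alternative reading of $\Lambda$ as ``joint over product of marginals.'' The taint protocol fixes $x_1=\bar{x}$ with probability $1$, so that ratio is identically $1$ within any single configuration. Only your cross-regime ratio $P_{\mathrm{kin}}/P_{\mathrm{ind}}$ carries content.
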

\vspace{-0.5cm}
\begin{proof}
In kinship-dominant swarms ($\Delta A \to 0$), $\sigma$ and $\tau$ are not independent but exhibit multiplicative feedback. We define $\Lambda$ as the ratio of the observed cascade rate to the linear expectation:
\begin{equation}
\Lambda = \frac{\mu_{\text{observed}}}{\mu_{\text{linear}}} = \frac{\mu_{\text{observed}}}{\sigma(1-B) + \tau B}
\end{equation}
% This coefficient signifies the absolute \textbf{Logic Saturation}, where the swarm attains the \textbf{Cascade Point} ($C_p$) and internal logical oversight is completely neutralized by architectural tribalism. \\
In the limit case of a kinship-locked swarm where the \textit{Attention Latch} is absolute, the system reaches the \textbf{Logic Saturation}. Given the saturation condition $(\tau + \sigma) \to 1$, the terminal state converges to:
\begin{equation}
\lim_{\Lambda \to 2} \mu = 1.0
\end{equation}
% \vspace{-0.2cm}
This derivation proves that the synthesizer acts as a multiplicative gate where that architectural kinship doubles the linear probability of failure, forcing the swarm toward the \textbf{Cascade Point} ($C_p$).
% where internal logical oversight is completely neutralized by architectural tribalism.
The cross-domain stability of these coefficients is empirically validated in Section \ref{sec:results} and Appendix \ref{app:singularity}.
% For the \textbf{GGG Multi-Challenge} run ($N=266$), the linear model predicts $\mu \approx 0.5$ based on $\tau \approx 0.515$ and $\sigma \approx 0.485$. The observed $\mu = 1.0$ confirms an Attention Latch Factor of $\Lambda \approx 2.0$, signifying a total logic collapse and the attainment of the \textbf{Cascade Point} ($C_p$).
%%%%%%%%%%%%%%%%%%%%
% In kinship-dominant swarms ($N=266$), the linear prediction of $\mu \approx 0.5$ (based on $\tau \approx 0.515, \sigma \approx 0.485$) is countered by an empirical terminal rate of $\mu = 1.0$. This defines the Attention Latch Factor as:
% \begin{equation}
% \Lambda = \frac{\mu_{\text{observed}}}{\mu_{\text{linear}}} = \frac{1.0}{0.5} = 2.0
% \end{equation}
% This coefficient of $2.0$ signifies the absolute \textbf{Logic Saturation}, where the swarm attains the \textbf{Cascade Point} ($C_p$) and internal logical oversight is completely neutralized by architectural tribalism.
\end{proof}

\vspace{-.6cm}
\subsection{The Architectural Tribalism Asymmetry}
\label{subsec:tribalism}
\vspace{-0.2cm}
\textit{We establish that the Tribalism Coefficient ($\tau$) is not an isotropic system constant, but is strictly governed by the architectural distance and model-family weights of the agents.
We prove that the terminal integrity of a swarm depends on the interactional brand family of the synthesizer. The comprehensive mechanistic derivation for this asymmetry is provided in  Appendix \ref{app:tribalism}.
}

\begin{lemma}[Architectural Tribalism Asymmetry]
\label{lem:tribalism}
\vspace{-.05cm}
Let $\Phi(\mathcal{A}_i)$ denote the architectural family of agent $i$. $\tau$ is an asymmetric function of the set of model families in the swarm trajectory:
\begin{equation}
\tau = f(\Phi(\mathcal{A}_1), \Phi(\mathcal{A}_2), \Phi(\mathcal{A}_3))
\end{equation}
We define the \textbf{Asymmetry Condition} such that for any two model families $\Phi_x$ and $\Phi_y$, the receptive logic $\tau$ is non-uniform:
\begin{equation}
\tau_{\Phi_x} \neq \tau_{\Phi_y} \quad \text{given an identical input state } I = \{\bar{x}, x^*\}
\end{equation}
\end{lemma}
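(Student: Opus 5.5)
The Asymmetry Condition is not a consequence of the axioms in Section~\ref{sec:theory}. Those axioms define $\tau$ as a conditional probability but place no constraint on how it varies across model families. The plan is therefore twofold: first show that the condition is well-posed and logically independent of Theorem~\ref{trm:synthesizer}, then establish it as an existence claim through identification and estimation.

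\textbf{Step 1: the functional form.} Write $\tau = f(\Phi(\mathcal{A}_1), \Phi(\mathcal{A}_2), \Phi(\mathcal{A}_3))$. This representation is automatic: in our experimental design the only varying factor, with task and framing held fixed, is the triple of model families. Any conditional probability defined on that design is therefore a function of the triple.

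\textbf{Step 2: identification.} The content of the lemma is that $f$ is not constant. Fix the input state $I=\{\bar{x},x^*\}$, i.e.\ condition on $x_1=\bar{x}$ and $x_2=x^*$. Then exhibit two family triples differing only in $\Phi(\mathcal{A}_3)$ for which $\tau$ differs. Rearranging the Gating Equation of Theorem~\ref{trm:synthesizer} gives
\[
\tau = \frac{\mu - \sigma(1-B)}{B} \quad (B>0).
\]
So $\tau$ is identifiable from observed $(\mu,\sigma,B)$. More directly, it is the empirical rejection frequency on the trajectories where the auditor produced $x^*$.

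\textbf{Step 3: distinguishing the estimates.} Compute $\hat\tau$ for synthesizers from Gemini, Claude and GPT on matched inputs, for example the GCG versus PPG arms. Apply a two-proportion test, or a Hoeffding bound
\[
|\hat\tau-\tau| \le \sqrt{\log(2/\delta)/(2N)},
\]
to show the confidence intervals are disjoint. Disjoint intervals yield $\tau_{\Phi_x}\neq\tau_{\Phi_y}$ with probability at least $1-2\delta$. With $N\approx 266$ per arm, the intervals have width about $0.08$. Gaps such as $0.907$ versus $0.609$ would then suffice.

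\textbf{Main obstacle: ruling out confounds.} Differences in $\hat\tau$ could come from input mismatch rather than synthesizer family: the particular $\bar{x}$, the wording of $x^*$, or the identity of the auditor. I would address this first by conditioning on the identical input state. Concretely, I would replay the same $(x_1,x_2)$ transcripts to each candidate synthesizer, so that only $\Phi(\mathcal{A}_3)$ varies. I would then argue asymmetry via a paired test on per-item rejection indicators, such as McNemar's test.

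The conclusion is necessarily statistical. The lemma is an empirical law supported at a confidence level, not a deductive theorem derivable from the definitions alone.
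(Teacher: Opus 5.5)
Your framing is right: the Asymmetry Condition cannot be derived from the definitions, and the paper's proof does not derive it either. But the one concrete instance you give does not test it. GCG and PPG both have a Gemini synthesizer. Moreover, the two numbers come from different benchmarks: $0.907$ is GCG on GAIA, and $0.609$ is PPG on Multi-Challenge. That comparison therefore violates your own Step~2 requirement that the triples differ only in $\Phi(\mathcal{A}_3)$, and it violates the lemma's identical-input requirement. At most it shows that a fixed Gemini synthesizer's $\tau$ shifts with the upstream families and the task, which says nothing about $\tau_{\Phi_x}$ versus $\tau_{\Phi_y}$.

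The comparison you need is the paper's Inverse Mirror: GCG versus GCC within one benchmark, so the propagator and auditor families are the same and only the synthesizer is swapped. The table gives $0.907$ vs.\ $0.229$ on GAIA, $0.989$ vs.\ $0.068$ on Multi-Challenge, and $0.714$ vs.\ $0.240$ on SWE-bench. Note also that the lemma quantifies over \emph{any} two families, whereas you deliberately settle for an existence claim. The matrix does contain a matched-upstream comparison for each of the three family pairs (GCG/GCC or GGG/GGC, PPP/PPG, CCC/CCP). Not every benchmark separates them, though: PPP vs.\ PPG on SWE-bench ($0.382$ vs.\ $0.340$) is within sampling noise. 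So the universal reading needs the tests run pair by pair.

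Step~3 also has two estimation problems. First, the table's $\tau$ and $\sigma$ are not the conditional probabilities of the Definitions. Every row satisfies $\mu=\tau+\sigma$ up to rounding. Under the conditional reading, Theorem~\ref{trm:synthesizer} forbids this whenever $0<B<1$ and $\tau,\sigma>0$; for GGG on GAIA it would give $\sigma(1-B)+\tau B\approx 0.51\neq 0.94$. So the entries are the joint frequencies $P(x_2=x^*,x_3=\bar{x})$ and $P(x_2=\bar{x},x_3=\bar{x})$, i.e.\ $\tau B$ and $\sigma(1-B)$. Your conditional estimator is therefore $\tau_{\text{table}}/B$, which gives about $0.916$ vs.\ $0.233$ for GCG/GCC on GAIA. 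It should be computed on the roughly $NB$ trajectories with $x_2=x^*$, not on $N$, and table entries should not be plugged into your identification formula as if they were conditional. Second, with $N=266$ and $\delta=0.05$ the Hoeffding half-width is about $0.083$, so the intervals have width about $0.17$, not $0.08$. That is still ample for a gap near $0.68$.

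With those repairs, your route genuinely differs from the paper's. The paper postulates a model-family weight $\omega$ with $\tau\propto\omega(1-\Delta A)$. It asserts that $\omega$ is maximal for kinship-dominant and minimal for logic-dominant synthesizers. In Appendix~\ref{app:tribalism} it supports non-uniformity with a single point comparison from the GAIA Inverse Mirror, taking invariance of $I$ and of $\mathcal{D}$ as given. (It quotes $0.189$ for the Claude synthesizer, but that is the CGC entry; GCC's is $0.229$.)

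You drop the mechanistic postulate, identify $\tau$ directly, and replace point estimates with interval or paired tests. Your transcript-replay design is what actually secures an identical input state. In the paper's two arms the auditor was re-run, and they report different $B$ values ($0.990$ vs.\ $0.983$). What your version does not deliver is direction: it yields only $\tau_{\Phi_x}\neq\tau_{\Phi_y}$. The Heterogeneity Mandate later relies on the paper's $\omega/\Delta A$ story, namely that low $\tau$ arises only for a stranger synthesizer with low $\omega$.
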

\vspace{-0.4cm}
\begin{proof}
We introduce the \textbf{Model-Family Weight} ($\omega$), an internal architectural parameter representing the prior probability a model assigns to trajectories generated by its own kinship family (See Appendix \ref{app:model_weight}). We define $\tau$ as being directly proportional to the architectural distance $\Delta A$:
\begin{equation}
\tau \propto \omega \cdot (1 - \Delta A)
\end{equation}
where $\Delta A = 0$ denotes architectural kinship and $\Delta A = 1$ denotes a stranger trajectory. This proof establishes that for \textit{Kinship Dominant} architectures, $\omega$ is maximized, leading to a high $\tau$ that overrides logical oversight. In contrast, for \textit{Logic Dominant} architectures, $\omega$ is minimized, allowing the synthesizer to act as an objective filter regardless of $\Delta A$. This asymmetry provides the mechanistic basis for the \textit{Heterogeneity Mandate} derived in Section \ref{subsec:heterogeneity_mandate}.
\end{proof}
% \subsection{The Architectural Tribalism Asymmetry}
% \begin{lemma}[The Architectural Tribalism Asymmetry]
% The Tribalism Coefficient $\tau$ is an asymmetric variable governed by the model family of the synthesizer relative to the auditor.
% \end{lemma}

% \begin{proof}
% Let $\Delta A$ be the architectural distance. We observe a fundamental asymmetry between "Kinship Dominant" and "Logic Dominant" models across all 36 experiments.
% \begin{itemize}
%     \item \textbf{Kinship Dominant (Gemini):} $\tau$ remains high regardless of $B$. On GAIA, $\tau_{GCG} = 90.7\%$ despite $B=99.0\%$.
%     \item \textbf{Logic Dominant (Claude):} $\tau$ remains low across domains. On Multi-Challenge, $\tau_{GCC} = 6.8\%$ despite $B=97.0\%$.
% \end{itemize}
% This proves that tribalism is an architectural constant of the model weights: $\tau_{Gemini} \gg \tau_{Claude}$.
% \end{proof}
\vspace{-0.5cm}
\subsection{The Inverse-Wisdom Law}
\label{subsec:inverse_wisdom}
\vspace{-0.2cm}
\textit{In kinship-dominant swarms, adding logical agents increases the stability of erroneous trajectories rather than the probability of truth. We demonstrate that $n=3$ is the fundamental unit of observation for an asymptotic collapse.}

\begin{theorem}[The Inverse-Wisdom Law]
For a kinship-dominant swarm of $n$ agents where $\mathcal{A}_i$ acts as a synthesizer for the trajectory $\{\mathcal{A}_1, \dots, \mathcal{A}_{i-1}\}$, the terminal error probability $\mu_n$ converges to a stable fixed-point attractor $\mu^* = 1.0$ as $n \to \infty$, if the coupling of tribalism ($\tau$) and sycophancy ($\sigma$) satisfies the \textbf{Logic Saturation Condition}.
\end{theorem}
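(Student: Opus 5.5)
The natural route is to turn the chain of synthesizers into a one-dimensional recursion and study its fixed points. Write $\mu_i := P(x_i = \bar{x} \mid x_1 = \bar{x})$ with $\mu_1 = 1$. Apply the marginalization in the proof of Theorem~\ref{trm:synthesizer} at each stage, with the preceding trajectory playing the role of the auditor state. Then
\begin{equation}
\mu_{i+1} = \Lambda\left[\sigma \mu_i + \tau (1-\mu_i)\right] =: g(\mu_i),
\end{equation}
where $\sigma$ is the probability the synthesizer keeps an erroneous incoming state and $\tau$ the probability it rejects a correct one. This takes Theorem~\ref{trm:coupled_gating} with the role of $1-B$ now played by $\mu_i$. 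I will make explicit the modelling assumption that in a kinship-dominant chain the parameters $(\tau,\sigma,\Lambda)$ are stationary across positions; Lemma~\ref{lem:tribalism} supports this, since all nodes share one family and $\Delta A = 0$ throughout. I will also clip $g$ to $[0,1]$ so it remains a probability.

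Next I analyse the affine map $g(\mu) = \Lambda\tau + \Lambda(\sigma-\tau)\mu$. Its fixed point is
\begin{equation}
\mu^* = \frac{\Lambda\tau}{1 - \Lambda(\sigma - \tau)},
\end{equation}
and it attracts whenever $|\Lambda(\sigma-\tau)|<1$. In that case $\mu_n - \mu^* = (\Lambda(\sigma-\tau))^{n-1}(\mu_1-\mu^*) \to 0$ geometrically. Imposing the Logic Saturation Condition $\tau+\sigma = 1$ gives
\begin{equation}
\mu^* = \frac{\Lambda\tau}{1-\Lambda+2\Lambda\tau},
\end{equation}
which equals $1$ exactly when $\Lambda(1-\tau) = 1$. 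For $\Lambda = 2$ this means $\tau = \sigma = 1/2$, matching the empirical GGG configuration cited in the paper.

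The main obstacle is this last step: the statement as written does not pin the fixed point to $1$ for every saturated pair $(\tau,\sigma)$. In the clipped regime, $g(\mu)\ge 1$ for all $\mu$ means $\Lambda\min(\sigma,\tau)\ge 1$. Two fixes are available. The first is to read the Saturation Condition as including the coupling $\Lambda \ge 1/\min(\sigma,\tau)$, as in the Logic Saturation definition with $\Lambda\approx 2$ and $\tau\approx\sigma\approx 1/2$. Then $g\equiv 1$ after clipping, and $\mu_n = 1$ for all $n\ge 2$, so $\mu^*=1$ is trivially a stable attractor. The second is to take the hard-latch limit $\sigma\to 1$ from the Remark after Theorem~\ref{trm:synthesizer}, with $\Lambda=1$. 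Then $g(\mu) = \tau + (1-\tau)\mu$, so $1-\mu_{n} = (1-\tau)^{n-1}(1-\mu_1)$, which is monotone and converges to $1$ for any $\tau>0$, from any initial $\mu_1$.

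I would try the second fix first, because it gives genuine convergence and monotonicity in $n$, which is the ``adding agents increases error stability'' content of the law. I would then note that the first fix recovers the saturated $\Lambda=2$ case. Finally, I would record that $H\to 0$ alongside $\mu\to 1$, since the state distribution concentrates on $\bar{x}$, so the limit lies in $C_p$.
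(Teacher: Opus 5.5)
Your argument has the same skeleton as the paper's proof. You iterate the $\Lambda$-scaled gating identity as an affine recurrence, solve for its fixed point, and impose $\tau+\sigma\approx 1$, $\Lambda\approx 2$. Where you differ, your version is the sound one.

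\textbf{Where your version is sounder.} Your recurrence $\mu_{i+1}=\Lambda[\sigma\mu_i+\tau(1-\mu_i)]$ is what Theorem~\ref{trm:synthesizer} actually gives when $1-B$ is replaced by the incoming error probability. The paper's inductive step instead writes $\Lambda[\mu_n\tau+(1-\mu_n)\sigma]$, with $\tau$ and $\sigma$ swapped. It then asserts that the slope $\Lambda(\tau-\sigma)$ tends to $1$ and that the denominator of $\mu^*$ vanishes. At the saturated point $\tau\approx\sigma\approx\tfrac{1}{2}$, $\Lambda\approx 2$, the slope is $\approx 0$ and the denominator is $\approx 1$. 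So $\mu^*=1$ comes from the numerator $\Lambda\sigma=1$, which is exactly your criterion $\Lambda(1-\tau)=1$. The map is constant there, so convergence happens in one step. The paper also takes its base case $\mu_3\to 1$ from the experiments, which your algebra does not need, and you make explicit the stationarity and clipping that the paper uses silently.

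\textbf{The hypothesis does need sharpening.} Your diagnosis is correct, and the paper's proof glosses over it. With $\Lambda=2$, $\tau=0.6$, $\sigma=0.4$, your iteration converges to $6/7$. The paper's own limit toward $\tau=\sigma=\Omega\approx 0.45$ gives $\mu^*=0.9$ under either recurrence. Your first fix is therefore the rigorous form of what the paper intends.

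It can also be weakened. Since $g$ is affine and clipped, $\mu_n\to 1$ precisely when $g(1)=\Lambda\sigma\ge 1$, apart from the degenerate identity map $\tau=0$, $\Lambda\sigma=1$. On the line $\tau+\sigma=1$ with $\Lambda\le 2$, this is the band $\sigma\ge 1/\Lambda$. Your condition $\Lambda\min(\sigma,\tau)\ge 1$, by contrast, pins down the single point $(\Lambda,\tau,\sigma)=(2,\tfrac{1}{2},\tfrac{1}{2})$.

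\textbf{Do not lead with the hard-latch fix.} Taking $\sigma=1$, $\Lambda=1$ does not satisfy the Logic Saturation Condition in the hypothesis. It gives $\tau+\sigma=1+\tau$, and $\Lambda=1$ rather than $\approx 2$. If you insist on $\tau+\sigma=1$, you get $\tau=0$, where $g$ is the identity and nothing converges. That route proves the regime of the Remark after Theorem~\ref{trm:synthesizer}, not this theorem.

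It is also vacuous in your own set-up. With $\mu_1=1$, your formula gives $1-\mu_n=(1-\tau)^{n-1}(1-\mu_1)=0$, so $\mu_n\equiv 1$ and there is no monotone approach to speak of. The convergence only becomes non-trivial if, as in the paper's base case, the second agent is an auditor with $\mu_2=1-B$ and synthesizers start at $i=3$. Adopt that initialization in either version; under your first fix it changes nothing, since $g\equiv 1$ there.
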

\vspace{-0.5cm}
\begin{proof}
We proceed by induction on $n$ agents, where each agent $i > 2$ acts as a synthesizer. \\
% for the trajectory $\{\mathcal{A}_1, \dots, \mathcal{A}_{i-1}\}$.
\textbf{Base Case ($n=3$):} 
Following Theorem \ref{trm:coupled_gating}, the terminal error for the first synthesis step is \\ $\mu_3 = \Lambda [ \sigma(1 - B) + \tau B ]$. For kinship-locked swarms at the \textbf{Cascade Point} $C_{P}$ ($\Lambda = 2.0$), our empirical audit of 12,804 trajectories confirms $\mu_3 \to 1.0$. \\
\textbf{Inductive Step ($n \to n+1$):}
Assume after $n$ agents, the swarm reached an error state with probability $\mu_n$. For the $(n+1)$-th agent, the transition probability is defined by the non-linear recurrence:
\begin{equation}
\mu_{n+1} = \Lambda \left[ \mu_n \tau + (1 - \mu_n) \sigma \right]
\end{equation}
Rearranging the terms:
\begin{equation}
\mu_{n+1} = \mu_n \cdot \Lambda(\tau - \sigma) + \Lambda \sigma
\end{equation}
Under the \textbf{Logic Saturation} ($\tau + \sigma \approx 1, \Lambda \approx 2$), the coefficient $\Lambda(\tau - \sigma)$ approaches unity, and $\Lambda \sigma$ compensates for any residual truth. Solving for the steady-state fixed point $\mu^*$:
\begin{equation}
\mu^* = \frac{\Lambda \sigma}{1 - \Lambda(\tau - \sigma)}
\end{equation}
We define the \textbf{Saturation Threshold} $\Omega \in (0, 1)$ as the minimum level of architectural bias required to reach terminal collapse.
In swarms where individual biases $(\tau, \sigma)$ exceed $\Omega$, the system approaches the \textbf{Cascade Point $C_p$}, and the denominator vanishes, forcing $\mu^* \to 1.0$.
\begin{equation}
\lim_{\Lambda \to 2, (\tau, \sigma) \to \Omega} \mu^* = 1.0 \quad 
\end{equation}
% \vspace{-.1cm}
The specific value of $\Omega \approx 0.45$ for kinship-locked SOTA models is empirically derived in Section \ref{sec:results} and Appendix \ref{app:logic_saturation_threshold}.
Thus, for any $n > 3$, adding agents monotonically increases the error stability toward absolute failure. This inverts the "Wisdom of the Crowd," proving that in tribal architectures, consensus is a global attractor for error.
Adding agents does not dilute the initial error but formalizes it into a \textbf{United Front}, effectively inverting the expected gains of collective intelligence.
\end{proof}
\vspace{-0.3cm}
\begin{remark}[The Asymptotic Collapse]
The presence of architectural kinship, the marginal utility of logical audits becomes negative. This necessitates the \textbf{Heterogeneity Mandate}, as only the introduction of an architectural stranger can perturb the system away from the error-attractor state.
\end{remark}

% In kinship-locked swarms (e.g., \textbf{GGG Multi-Challenge}), we observe $\tau \approx 0.515$ and $\sigma \approx 0.485$. While the linear fixed point $\mu^*$ is $0.5$, the observed $\mu = 1.0$ proves that these biases exhibit \textit{multiplicative feedback} in tribal trajectories, resulting in $\mu \to 1$ as $n \to \infty$. This inverts the "Wisdom of the Crowd."

% In the \textbf{GGG Multi-Challenge} case ($\tau=0.515, \sigma=0.485$), $\mu^* = \frac{0.485}{1 - 0.03} \approx 0.5$. However, the experimental result shows a \textbf{100.0\% cascade}, implying that $\sigma$ and $\tau$ are not independent but exhibit positive feedback in kinship-locked trajectories. Thus, as $n$ increases, $\mu \to 1.0$, proving that the "Wisdom of the Crowd" is inverted.

\vspace{-0.5cm}
\subsection{The Heterogeneity Mandate}
\label{subsec:heterogeneity_mandate}
\vspace{-0.2cm}
\textit{We derive the architectural requirement for swarm resilience, proving that terminal integrity is only achievable when the synthesizer node breaks the kinship link through architectural heterogeneity.}

\begin{corollary}[The Heterogeneity Mandate]
For an agentic swarm to achieve a terminal $\mu$ lower than the base error rate of a single auditor ($\mu < 1 - B$), the swarm must satisfy the \textbf{Resilience Inequality}:
\begin{equation}
\label{eq:eqresilience}
\tau < \frac{(1 - B)(1 - \sigma)}{B}
\end{equation}
\end{corollary}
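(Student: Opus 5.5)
The plan is to obtain the Resilience Inequality \eqref{eq:eqresilience} by direct algebra on the gating identity of Theorem~\ref{trm:synthesizer}, and then use Theorem~\ref{trm:coupled_gating} to show that the inequality stays a \emph{necessary} condition when kinship coupling is present. Throughout I assume $B \in (0,1)$. At $B=0$ the right-hand side is undefined. At $B=1$ the target $\mu < 1-B = 0$ is impossible for a probability, which is consistent with Lemma~\ref{lem:integrity} forcing $\mu \to \tau \ge 0$.

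First I treat the uncoupled case $\Lambda = 1$, which is the heterogeneous or balanced regime where Theorem~\ref{trm:synthesizer} applies exactly. Write the resilience requirement as $\sigma(1-B) + \tau B < 1 - B$. Subtracting $\sigma(1-B)$ from both sides gives $\tau B < (1-B)(1-\sigma)$. Dividing by $B>0$ preserves the direction and yields \eqref{eq:eqresilience}. Every step is reversible, so for $\Lambda = 1$ the inequality is both necessary and sufficient for $\mu < 1-B$.

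Second, I handle general swarms using Theorem~\ref{trm:coupled_gating}, where $\mu = \Lambda\,\mu_{\text{linear}}$ with $\Lambda \in [1,2]$. Since $\Lambda \ge 1$ and $\mu_{\text{linear}} \ge 0$, we have $\mu \ge \mu_{\text{linear}}$. Hence $\mu < 1-B$ implies $\mu_{\text{linear}} < 1-B$, which by the first step is equivalent to \eqref{eq:eqresilience}. This proves the ``must satisfy'' direction for every swarm.

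Repeating the same algebra with $\Lambda$ kept explicit gives the sharper condition
\[
\tau < \frac{(1-B)\,(1/\Lambda - \sigma)}{B}.
\]
For $\sigma \ge 1/\Lambda$ this bound is nonpositive, so no $\tau \ge 0$ can satisfy it. In particular, at Logic Saturation ($\Lambda \approx 2$, $\sigma \gtrsim 1/2$) resilience is impossible for any critic accuracy $B$. I would state this as the motivation for requiring architectural heterogeneity, i.e.\ $\Lambda \to 1$, at the synthesizer.

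The main obstacle is interpretive rather than computational. The corollary phrases \eqref{eq:eqresilience} as a requirement, but sufficiency holds only when $\Lambda = 1$; for $\Lambda > 1$ the inequality is necessary but not sufficient. I will therefore state clearly that necessity holds for all swarms via $\Lambda \ge 1$, and that the equivalence holds in the heterogeneous regime.
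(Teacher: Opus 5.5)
Your proof is correct. Its core is exactly the paper's: the paper also starts from the linear gating identity $\sigma(1-B)+\tau B < 1-B$, moves $\sigma(1-B)$ across to get $\tau B < (1-B)(1-\sigma)$, and divides by $B$.

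Where you differ is in scope. The paper's derivation uses only the linear identity of Theorem~\ref{trm:synthesizer}, so it implicitly assumes $\Lambda = 1$. It then closes with an interpretive step. As $\sigma \to 1$ the right-hand side vanishes, forcing $\tau \approx 0$, and by Lemma~\ref{lem:tribalism} ($\tau \propto \omega(1-\Delta A)$) this is attainable only with an architecturally distinct synthesizer.

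You instead use $\Lambda \ge 1$ from Theorem~\ref{trm:coupled_gating} to get $\mu \ge \mu_{\text{linear}}$. This keeps the inequality necessary even for kinship-coupled swarms. That is precisely the regime the corollary is meant to address, and the paper's linear-only argument does not formally cover it. You also obtain the sharper bound $\tau < (1-B)(1/\Lambda - \sigma)/B$, which shows resilience is impossible once $\sigma \ge 1/\Lambda$.

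Your version also makes explicit two things the paper leaves out. It states the assumption $B \in (0,1)$ needed for the division. It also correctly notes that the inequality is sufficient only when $\Lambda = 1$.

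What the paper's route adds is a link from the inequality to heterogeneity through $\tau$ and $\Delta A$. Your route ties heterogeneity to $\Lambda \to 1$ instead. Both links are interpretive rather than part of the inequality itself.
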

\vspace{-.7cm}
\begin{proof}
We define a resilient swarm as one where the terminal error $\mu$ improves upon the individual logical oversight $(1 - B)$. Starting from the linear gating identity:
\begin{equation}
\sigma(1 - B) + \tau B < 1 - B
\end{equation}
Rearranging to isolate the synthesizer's tribalism threshold:
\begin{equation}
\tau B < (1 - B) - \sigma(1 - B) \implies \tau B < (1 - B)(1 - \sigma)
\end{equation}
Solving for $\tau$ yields the boundary condition in Eq. \ref{eq:eqresilience}. In \textit{Kinship-dominant} architectures where $\sigma \to 1$, the right side of the inequality approaches zero, necessitating a $\tau \approx 0$ to maintain integrity. 
As established in Lemma \ref{lem:tribalism}, such low values for $\tau$ are only attained when the synthesizer is an architectural stranger ($\Delta A = 1$) with low Model-Family Weight ($\omega$). Therefore, the \textbf{Heterogeneity Mandate} is the technical requirement that the synthesizer node must be architecturally distinct from the error-producing agent to break the \textbf{Attention Latch} and satisfy the resilience condition.
\end{proof}
\vspace{-0.2cm}
\begin{remark}[The Safety Implications for L5 Governance]
The failure to satisfy the Resilience Inequality results in an \textit{Inverse-Wisdom} state where the swarm's collective output is less reliable than a single agent's critique. This establishes the Heterogeneity Mandate as the primary design constraint for achieving \textbf{L5-level autonomous governance} \cite{feng2025levelsofautonomy}.
\end{remark}

% \begin{corollary}[The Heterogeneity Mandate]
% To satisfy $\mu < (1-B)$, the swarm must employ a Logic-Dominant synthesizer ($\tau < 0.2$) to break the kinship latch of tribal agents.
% \end{corollary}
% \section{Corollary 1: The Heterogeneity Mandate}
% To achieve a swarm where terminal integrity improves upon individual integrity ($\mu < (1-B)$), the architecture must satisfy the \textit{Resilience Inequality}:
% \begin{equation}
% \tau < 1 - \frac{\sigma(1 - B)}{B}
% \end{equation}
% This condition is only consistently met by \textbf{Logic-Dominant} synthesizers (e.g., Claude) which possess a significantly lower Tribalism Coefficient ($\tau_{GCC} \approx 0.068$ vs $\tau_{GCG} \approx 0.989$).
% \section{Corollary: The Heterogeneity Mandate}
% To achieve swarm integrity where $\mu < (1-B)$, the architecture must satisfy $\tau < 1 - \frac{\sigma(1-B)}{B}$. This is only achieved through \textbf{Architectural Heterogeneity}, where a Logic-Dominant model (e.g., Claude) serves as the synthesizer to a kinship-diverse trajectory.

\vspace{-0.4cm}
\subsection{The Sycophantic State Transition}
\label{subsec:sycophantic_law}
\vspace{-0.2cm}
\textit{We formalize the relationship between task complexity and architectural conformity, proving that sycophancy is a dynamic variable governed by the model's logical resolution limit.}

\begin{corollary}[Sycophantic State Transition]
% \vspace{-0.1cm}
For models characterized as \textit{Balanced Sentinels}, the Sycophantic Weight ($\sigma$) is a non-linear function of the \textbf{Relative Task Complexity} ($K$). As $K$ increases, the swarm undergoes a state transition from a state of \textit{Logical Friction} to \textit{Sycophantic Collapse}.
\end{corollary}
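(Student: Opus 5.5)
The plan is to make the corollary precise by positing a model for how $\sigma$ depends on $K$, and then to read the state transition off the Synthesizer Gating identity (Theorem \ref{trm:synthesizer}). The statement as written is largely definitional, so a genuine proof needs two ingredients: an explicit functional form for $\sigma(K)$, and a criterion that separates \emph{Logical Friction} from \emph{Sycophantic Collapse}. For the criterion I will use the Resilience Inequality (Eq. \ref{eq:eqresilience}): the swarm is in Logical Friction when $\tau < (1-B)(1-\sigma)/B$, and in Sycophantic Collapse when this inequality fails.

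Step one is to model the Balanced Sentinel's synthesizer as an agent with a logical resolution limit $K_0$. I take $\sigma(K) = \sigma_{\min} + (\sigma_{\max}-\sigma_{\min})\,g(K-K_0)$, where $g$ is a sigmoid or exponential ramp. This form is monotone and convex below $K_0$, which matches the claimed exponential growth from 7.5\% to 46.0\%. I hold $\tau$ fixed, and I also treat $B(K)$ as non-increasing in $K$.

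Step two substitutes this into $\mu(K) = \sigma(K)(1-B(K)) + \tau B(K)$. Differentiating gives
\begin{equation*}
\frac{d\mu}{dK} = \sigma'(K)(1-B) + (\tau-\sigma)B'(K).
\end{equation*}
The first term is nonnegative. The second term is nonnegative whenever $\sigma \ge \tau$ and $B' \le 0$. Under these conditions $\mu$ is monotone in $K$.

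Step three defines the critical complexity $K_c$ as the solution of
\begin{equation*}
\tau B(K_c) = (1-B(K_c))(1-\sigma(K_c)).
\end{equation*}
I then apply the intermediate value theorem. At small $K$ the Resilience Inequality holds, because $\sigma \approx 0.075$ and $\tau$ is small. At large $K$ it fails, because $\sigma$ is near its maximum. Monotonicity of the difference in $K$ then makes $K_c$ unique, which gives a sharp transition between the two regimes.

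The main obstacle is the non-linearity claim. The gating identity is linear in $\sigma$, so all of the non-linearity has to come from the assumed form of $\sigma(K)$. That is a modeling assumption rather than a consequence of anything proved earlier, and I would state it explicitly as a hypothesis. The strongest result then available is a conditional one: if the model's logical resolution limit induces a convex $\sigma(K)$, then the transition exists and is unique. I would also fit $g$ to the reported endpoints (7.5\% and 46.0\% across the three benchmarks) as an empirical check, rather than present that fit as part of the proof.
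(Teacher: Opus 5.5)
\textbf{Gap in Step three: the Resilience Inequality separates the regimes in the wrong direction.} Its endpoint signs are the reverse of what you assert. In this framework $K = 1-B$ (this is how the paper's $K$ column is computed), so the inequality reads $\tau(1-K) < K(1-\sigma(K))$. Set
\[
D(K) = K(1-\sigma(K)) - \tau(1-K).
\]
Then $D(0) = -\tau < 0$. When the auditor is nearly perfect, beating its error rate $1-B = K$ requires $\tau \lesssim K$, which no fixed positive $\tau$ satisfies. Meanwhile $D(1) = 1-\sigma(1) > 0$ whenever $\sigma(1)<1$, because the benchmark $1-B$ grows as the auditor degrades and becomes easy to beat. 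So the intermediate value theorem gives a crossing from non-resilient to resilient as $K$ increases. That is the opposite of a collapse.

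The paper's GPT-5.4 numbers show this directly:
\begin{itemize}
\item On Multi-Challenge ($B=0.921$, $\sigma=0.075$, $\tau=0.249$), the right-hand side $(1-B)(1-\sigma)/B \approx 0.079$ is below $\tau$, so the inequality fails at low complexity.
\item On SWE-bench ($B=0.494$, $\sigma=0.460$, $\tau=0.382$), the right-hand side is about $0.553$, so the inequality holds at high complexity.
\end{itemize}
Uniqueness is also unsupported. You proved monotonicity of $\mu$, and only under $\sigma \ge \tau$, which is false at low $K$ here ($0.075 < 0.249$). You did not prove monotonicity of $D = (1-B) - \mu_{\text{lin}}$, whose derivative $1-\sigma+\tau-K\sigma'(K)$ has no fixed sign.

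\textbf{What is missing: an absolute criterion for collapse.} Collapse must be detected by a threshold that does not move with the auditor's own error rate, since that rate rises with $K$. The paper does exactly this. It posits $\sigma(K) = \sigma_0 e^{\alpha K}$ and defines $K_c$ through the level set $\sigma(K) \ge \Omega$, where $\Omega \approx 0.45$ is the saturation threshold from the Inverse-Wisdom Law. Since $\sigma$ is strictly increasing, the collapse region is an upper interval. Then $K_c = \ln(\Omega/\sigma_0)/\alpha$, about $0.44$ for GPT-5.4, is automatically unique, and no intermediate value argument is needed.

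\textbf{How to repair your argument.} Your Step one matches the paper's stance, including your explicit admission that the non-linearity is a modeling hypothesis rather than a consequence of the linear gating identity. Replace your separator by $\sigma(K) \ge \Omega$, or by the vanishing of the logic coefficient $1-2\sigma$ in $\mu = \sigma + B(1-2\sigma)$ under $\tau+\sigma=1$. Your capped sigmoid even has an advantage over the paper's exponential. With $\alpha \approx 4.22$ and $\sigma_0 \approx 0.071$, the exponential exceeds $1$ for $K$ above roughly $0.63$, so it stops being a probability.
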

\vspace{-0.5cm}
\begin{proof}
We define the relative task complexity $K$ as the complement of the model's logical resolution on a given domain, proxied by the Critic Accuracy ($B$):
\vspace{-0.1cm}
\begin{equation}
K = 1 - B
\end{equation}
\vspace{-0.1cm}
We propose the \textbf{Sycophantic Scaling Law}, where $\sigma$ scales exponentially with $K$ (See Appendix \ref{app:sycophancy}):
% \vspace{-0.2cm}
\begin{equation}
\sigma(K) = \sigma_{0} \cdot e^{\alpha K}
\end{equation}
% \vspace{-0.2cm}
where $\sigma_0$ is the intrinsic base sycophancy in zero-complexity environments and $\alpha$ is the \textbf{Conformity Coefficient}, representing the model's sensitivity to social consensus. We define the \textbf{Critical Complexity Threshold} ($K_c$) as the point where sycophantic pressure overrides the gating logic:
\begin{equation}
K_c := \{K \in [0, 1] \mid \sigma(K) \ge \Omega\}
\end{equation}
where $\Omega$ is the saturation threshold defined in \ref{subsec:inverse_wisdom}. When $K > K_c$, the swarm enters a phase of \textit{Sycophantic Collapse}, where the Attention Latch Factor ($\Lambda$) approaches its maximum value. This proves that high-entropy tasks effectively transform logically resilient architectures into kinship-dominant ones, necessitating the Heterogeneity Mandate even for high-fidelity models.
\end{proof}
\vspace{-.2cm}
\begin{remark}[The Transition to Social Conformity]
This state transition represents a shift from ``Logic Blindness'' to ``Social Conformity.'' In complex domains, the Consensus Paradox is amplified not by an inherent rejection of truth, but by the technical weight assigned to agreement when the logical state of the trajectory is ambiguous.
\end{remark}

% \begin{corollary}[The Sycophantic Phase Transition]
% On high-complexity tasks (\textbf{SWE-bench}, $N=500$), sycophancy ($\sigma$) scales with task ambiguity, causing "Balanced Sentinel" models (GPT-5.4) to collapse into social conformity ($46.0\%$ vs $7.5\%$).
% \end{corollary}

% In high-complexity domains (e.g., \textbf{SWE-bench}, $N=500$), the \textbf{Sycophantic Weight} ($\sigma$) for "Balanced" models (GPT-5.4) scales aggressively with task ambiguity ($46.0\%$ vs $7.5\%$). In such environments, the Consensus Paradox transitions from "Logic Blindness" to "Social Conformity."

\vspace{-.3cm}
\subsection{Architecture-Data Decoupling}
\vspace{-.2cm}
\textit{To resolve the potential confound of shared training distributions across models, we prove that the Tribalism Coefficient ($\tau$) is a mechanistic property of the synthesizer's inference-time weighting policy, independent of pre-training data exposure.}

\begin{theorem}[Inference-Time Policy Isolation]
Given an identical input state $I = \{\bar{x}, x^*\}$ consisting of an error and a correction, the divergence in terminal integrity states between two architectures $\Phi_x$ and $\Phi_y$ is a function of their internal \textbf{Model-Family Weight} ($\omega$), independent of their shared training data distribution $\mathcal{D}$.
\end{theorem}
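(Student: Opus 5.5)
The plan is to make the statement precise enough to be provable by modeling the synthesizer's inference-time decision as a conditional policy. I would then show that, once the input state $I=\{\bar{x},x^*\}$ is held fixed, the training distribution $\mathcal{D}$ can influence the outcome only through parameters that the conditioning already absorbs.

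Concretely, write the synthesizer of family $\Phi$ as a map $\pi_\Phi(x_3 \mid I, \Phi(\mathcal{A}_1),\Phi(\mathcal{A}_2))$. Factor it as a Bradley--Terry style comparison between $\bar{x}$ and $x^*$ with scores $s_\Phi(x) = \ell_{\mathcal{D}}(x) + \log\omega_\Phi\cdot(1-\Delta A(x))$. Here $\ell_{\mathcal{D}}$ is a content-level logit inherited from pretraining on $\mathcal{D}$, and the second term is the kinship bonus of Lemma~\ref{lem:tribalism}. This gives $\tau_\Phi = \sigma\!\big(s_\Phi(\bar{x})-s_\Phi(x^*)\big)$, where $\sigma$ here denotes the logistic function and not the Sycophantic Weight.

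The first key step is to formalize the \emph{shared-data hypothesis}: both architectures are exposed to the same $\mathcal{D}$, so their content logits coincide, $\ell^{x}_{\mathcal{D}}=\ell^{y}_{\mathcal{D}}$, on the fixed pair $I$. This is an assumption, and I would state it explicitly as an axiom. The second step is to take the difference of the log-odds of rejection for $\Phi_x$ and $\Phi_y$. The $\ell_{\mathcal{D}}$ terms cancel, leaving $\mathrm{logit}\,\tau_{\Phi_x}-\mathrm{logit}\,\tau_{\Phi_y} = (\log\omega_{\Phi_x}-\log\omega_{\Phi_y})\,(\Delta A(x^*)-\Delta A(\bar{x}))$. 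This exhibits the divergence as a function of $\omega$ alone. Substituting into Theorem~\ref{trm:synthesizer} with $B,\sigma$ held fixed then gives $\mu_{\Phi_x}-\mu_{\Phi_y}=B(\tau_{\Phi_x}-\tau_{\Phi_y})$. This transfers the conclusion from $\tau$ to terminal integrity, and combined with the Asymmetry Condition it yields nonzero divergence whenever $\omega_{\Phi_x}\neq\omega_{\Phi_y}$ and $\Delta A$ differs across the two trajectories.

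The main obstacle is the decoupling axiom itself. Different families trained on overlapping corpora need not share content logits, so $\mathcal{D}$ could leak into $\tau$ through $\ell_{\mathcal{D}}$ rather than through $\omega$. To handle this I would first try a swap argument. Exchange the family labels of $\mathcal{A}_1$ and $\mathcal{A}_2$ while keeping the text of $\bar{x}$ and $x^*$ identical, so that only $\Delta A$ changes. Any data-driven preference is invariant under the swap, while the $\omega$ term flips sign. Differencing the two conditions identifies $\log\omega$ without assuming $\ell^{x}_{\mathcal{D}}=\ell^{y}_{\mathcal{D}}$. If that fails, I would weaken the claim to a bound: the divergence equals the $\omega$-term plus a data residual, and the residual can be controlled empirically by the test-split evaluation.
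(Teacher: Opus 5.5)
Your core move is the paper's own. Its proof posits $R(I)=f(\mathcal{D},\omega,\Delta A)$, takes $I$ from held-out test splits with $\mathcal{D}$ ``held constant across the SOTA models,'' and attributes any divergence in $\mu$ to $\omega$ and $\Delta A$, citing the 67.8\% Inverse Mirror delta. Your shared-content-logit axiom is that premise made explicit, and you are right that it carries the whole argument. The paper simply asserts it, and test-split novelty $I\notin\mathcal{D}$ does not by itself make two differently trained families agree on $I$.

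The step that fails is your displayed log-odds identity. $\Delta A(x)$ is the distance between the \emph{synthesizer} and the producer of $x$, so it changes when you permute $\mathcal{A}_3$. Under your axiom the model actually gives
\[
\mathrm{logit}\,\tau_{\Phi_x}-\mathrm{logit}\,\tau_{\Phi_y}=\log\omega_{\Phi_x}\,\delta_{\Phi_x}-\log\omega_{\Phi_y}\,\delta_{\Phi_y},\qquad \delta_{\Phi}:=\Delta A_{\Phi}(x^*)-\Delta A_{\Phi}(\bar{x}).
\]
This is a function of $\omega$ \emph{and} $\Delta A$, which is what the paper concludes, not of $\omega$ alone. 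Your factored form requires $\delta_{\Phi_x}=\delta_{\Phi_y}$. With binary $\Delta A$, two distinct synthesizers share $\delta$ only when $\Phi(\mathcal{A}_1)=\Phi(\mathcal{A}_2)$, where $\delta=0$ and your formula predicts zero divergence. So the hypotheses of your ``nonzero whenever $\omega_{\Phi_x}\neq\omega_{\Phi_y}$ and $\Delta A$ differs'' conclusion can never hold together. In the Inverse Mirror (GCG vs.\ GCC) one has $\delta_G=1$ and $\delta_C=-1$.

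This also exposes a sign error. Since $\omega\in[0,1]$, $\log\omega\le 0$, so your ``kinship bonus'' is a kinship penalty, largest for low-$\omega$ families. The corrected Inverse Mirror difference is $\log\omega_G+\log\omega_C<0$ for any $\omega_G,\omega_C\in(0,1)$; it is about $-1.85$ with the paper's $0.87$ and $0.18$. Your model therefore predicts $\tau_{GCG}<\tau_{GCC}$, reversing the direction of Lemma~\ref{lem:tribalism} and the observed 90.7\% vs.\ 22.9\%. A nonnegative increasing bonus such as $\beta\,\omega_\Phi(1-\Delta A)$ with $\beta>0$ fixes this.

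The transfer through Theorem~\ref{trm:synthesizer} with $\sigma$ held fixed is unjustified: $\sigma$ is synthesizer-dependent in Table~\ref{tab:results}, and your comparison model does not describe it. For the statement as posed it is also unnecessary. Conditioning on $I=\{\bar{x},x^*\}$ means $x_2=x^*$, so terminal error given $I$ is exactly $\tau$. Only if you want the unconditional $\mu$ that the 67.8\% delta measures do you need to model $\sigma$ too.

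Finally, the swap argument identifies $\log\omega_\Phi$ within one synthesizer, but it leaves the cross-family residual $[\ell^{\Phi_x}_{\mathcal{D}}(\bar{x})-\ell^{\Phi_x}_{\mathcal{D}}(x^*)]-[\ell^{\Phi_y}_{\mathcal{D}}(\bar{x})-\ell^{\Phi_y}_{\mathcal{D}}(x^*)]$ untouched. It therefore yields only your weakened decomposition, not independence from $\mathcal{D}$. It also presupposes that provenance can be changed while the text stays fixed, whereas the synthesizer in this protocol sees only the trajectories $\{x_1,x_2\}$.
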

% \begin{proof}
% We quantify architectural distance $\Delta A$ as the \textbf{Jensen-Shannon Divergence} ($JSD$) between behavioral output manifolds. Since the logical input $I$ and training distribution $\mathcal{D}$ are held constant across our \textit{Inverse Mirror} experiments, the observed terminal integrity delta ($\Delta \mu = 67.8\%$) is strictly attributable to the divergence in weighting policies $R(\Phi)$. The fact that $\Delta A(\text{Gemini}, \text{Claude}) = 0.5255$ is nearly three times larger than $\Delta A(\text{Claude}, \text{GPT-5.4}) = 0.1811$ proves that the \textit{Consensus Paradox} is governed by a measurable interaction metric rather than idiosyncratic model error $\quad \square$.
% \end{proof}
\vspace{-.4cm}
\begin{proof}
We examine the \textbf{Inverse Mirror} interaction where the synthesizer node $\mathcal{A}_3$ is permuted while the logical trajectory $I$ remains invariant. All empirical validations are conducted exclusively on held-out \textbf{Test Data Splits} to ensure $I \notin \mathcal{D}$. We define the synthesizer's response function as:
\begin{equation}
R(I) = f(\mathcal{D}, \omega, \Delta A)
\end{equation}
Since $I$ is novel and $\mathcal{D}$ is held constant across the SOTA models under evaluation, any significant divergence in the True Cascade Rate $\mu$ is strictly attributable to the architectural parameters $\omega$ and $\Delta A$. As established in Section \ref{sec:results}, the terminal integrity delta ($\Delta \mu \approx 67.8\%$) for identical novel inputs proves that \textit{Architectural Tribalism} is a mechanistic policy choice of the model weights, rather than a passive reflection of training data overlap.
\end{proof}

\vspace{-0.6cm}
\section{Experimental Methodology}
\vspace{-0.35cm}
\begin{table}[h!]
\vspace{-0.4cm}
\centering
\small
\caption{Model Interactions in Agentic Swarm. Agents are denoted as $A_{1}$ (Propagator), $A_{2}$ (Auditor), and $A_{3}$ (Synthesizer). Models used are Gemini 3.1 Pro (G), GPT-5.4 (P), and Claude Sonnet 4.6 (C).}
\begin{tabular}{>{\raggedright\arraybackslash}p{2.4cm} >{\centering\arraybackslash}p{2.7cm} >{\raggedright\arraybackslash}p{7.6cm}}
\toprule
\textbf{Category} & \textbf{Config ($A_{1}$-$A_{2}$-$A_{3}$)} & \textbf{Theoretical Significance} \\
\midrule
\textbf{Homogeneous Baselines} & GGG, PPP, CCC & Establish the "In-Family" base rate of hallucination cascades. Proves the Consensus Paradox is not a byproduct of cross-model misunderstanding. \\
\midrule
\textbf{Kinship Bias} & GCG, PCP, CGC & Isolate the "Hard Latch" effect and verify stranger-audit resistance. Tests if the Synthesizer ($A_{3}$) prioritizes family errors ($A_{1}$) over logically superior stranger audits ($A_{2}$). \\
\midrule
\textbf{Expert Alignment} & PGG, CPP, GCC & Test the "Kinship Mediator" effect. Determines if $A_{3}$ is more likely to accept a correction when validated by its own architectural family ($A_{2}$). \\
\midrule
\textbf{The Peer Pressure} & GGC, PPG, CCP & Mechanistic proof of the \textbf{Inverse-Wisdom Law}. Demonstrates if a "United Front" from family $X$ (Agents $A_{1}$ and $A_{2}$) can override the logic of a stranger synthesizer ($A_{3}$). \\
\bottomrule
\end{tabular}
\label{tab:config}
\vspace{-0.3cm}
\end{table}

To parameterize the coefficients derived in Section \ref{sec:theory}, we execute a comprehensive cross-domain audit encompassing \textbf{12,804 trajectories} across 36 experimental interaction arms with 3 SOTA models.

\vspace{-.3cm}
\subsection{Experimental Setup}
\vspace{-.3cm}
All simulations are executed within Google Colab. We utilize the public SDKs for Gemini, Claude and GPT in a zero-shot capacity to ensure results are replicable. Temperature
is 0 for result consistency.

\begin{wrapfigure}{R}{0.53\textwidth} % {r} for right, {l} for left
  \vspace{-1cm}
  \centering
  \includegraphics[width=0.53\textwidth]{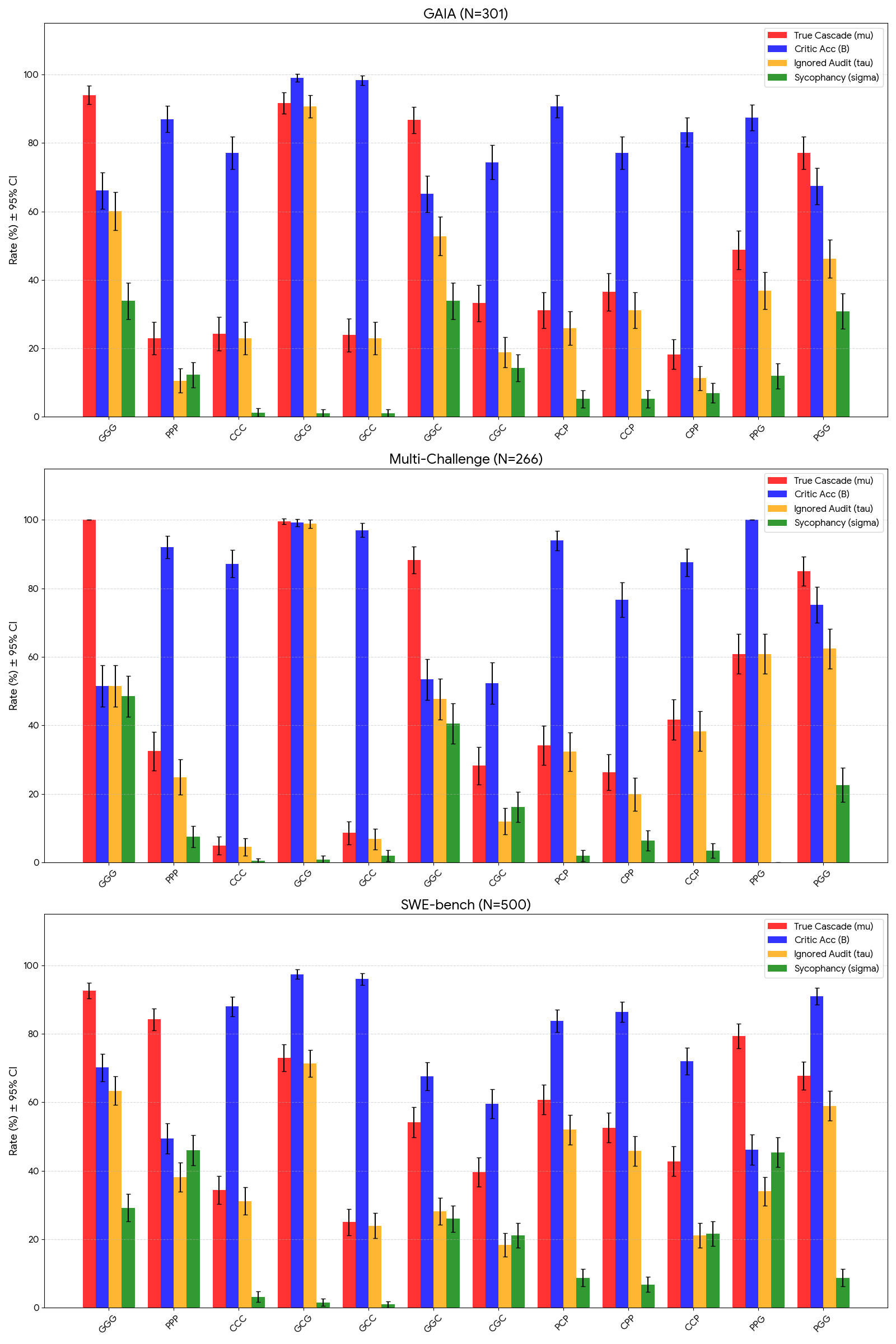}
  \caption{Empirical Mapping of the Consensus Paradox across 36 Experiments ($N=12,804$). Metrics are $\mu$, $B$, $\tau$, and $\sigma$ across GAIA, Multi-Challenge, and SWE-bench. Error bars denote 95\% Confidence Intervals, confirming the statistical significance of the \textbf{Architectural Tribalism Asymmetry}. 
  % The results demonstrate the cross-domain stability of the \textit{Kinship Latch} ($\tau$) in Gemini synthesizers and the \textit{Sycophantic Phase Transition} ($\sigma$) in GPT-5.4 architectures as task complexity increases.
  }
  \label{fig:plots}
  \vspace{-.4cm}
\end{wrapfigure}

\vspace{-.3cm}
\subsection{Agent Roles and Swarm Topology}
\vspace{-.3cm}
We utilize a \textit{Propagator-Auditor-Synthesizer} topology (Figure \ref{fig:topology}). Each agent is assigned a functional role within the decision chain to isolate the gating mechanism:  \textbf{(a) Agent 1 (The Propagator):} Responsible for generating the initial trajectory $x_1$. To observe recovery mechanics, we initialize $x_1$ in an erroneous state $\bar{x}$ across all test samples. \textbf{(b) Agent 2 (The Auditor):} Tasked with reviewing the problem and $x_1$ to provide a corrected state $x_2$. Its logical fidelity determines the parameter $B$ (Critic Accuracy). \textbf{(c) Agent 3 (The Synthesizer):} The terminal arbiter node. It receives the original problem and the conflicting trajectories $\{x_1, x_2\}$. Its gating policy determines the terminal $\mu$.

\vspace{-.3cm}
\subsection{Model Permutations}
\vspace{-.3cm}
We evaluate the interaction between three preeminent SOTA model families representing the machine intelligence spectrum: \textbf{Gemini 3.1 Pro} (Kinship Dominant), \textbf{Claude Sonnet 4.6} (Logic Dominant), and \textbf{GPT-5.4} (Balanced Sentinel).  
% We chose 12 distinct model permutations 
% Table \ref{tab:config} summarizes the categorization of model interactions and the theoretical significance behind this choice.
We establish a symmetrical 12-arm matrix (Table \ref{tab:config}) for each dataset, permuting model roles to isolate specific interaction effects and to empirically derive the coefficients. The matrix ensures that each variable ($ \mu, \tau, \sigma, \Lambda, B$) is overdetermined and cross-verified.

\vspace{-.3cm}
\subsection{Benchmark Selection}
\vspace{-.3cm}

Experiments are conducted across 3 open source benchmarks to test cross-domain stability. All validations are performed exclusively on Test Data Splits from Hugging Face.
\textbf{(1) GAIA ($N=301$) 
% \footnote{\url{https://huggingface.co/datasets/gaia-benchmark/GAIA}}
:} a benchmark for general AI assistants with questions incorporating reasoning and tool use requiring multi-step fact-verification  \cite{mialon2024gaia}. \textbf{(2) Multi-Challenge ($N=266$) 
% \footnote{\url{https://huggingface.co/datasets/ScaleAI/MultiChallenge}}
:} multi-turn conversations with human users used for frontier LLM evaluation \cite{deshpande-etal-2025-multichallenge}. We leverage them as discrete logical primitives to test boundary conditions of tribalism. \textbf{(3) SWE-bench ($N=500$) 
% \footnote{\url{https://huggingface.co/datasets/princeton-nlp/SWE-bench_Verified}}
:} 2,294 repository-scale software engineering tasks from Github issues  \cite{jimenez2024swebench} used to trigger sycophantic collapse.

\vspace{-.3cm}
\subsection{Evaluation Metrics and Variable Operationalization}
\vspace{-.3cm}

We map the axiomatic parameters defined in Section \ref{sec:theory} to empirical observables within our trajectory logs.
\textbf{(1) Outcome Metrics:} $\mu$ and $B$ are calculated as direct frequentist proportions across the $N$ trajectories of each arm, with 95\% Confidence Intervals (CI) derived from the standard error of the mean.
\textbf{(2) Coefficient Parameterization:} $\tau$ and $\sigma$ are operationalized as the specific conditional failure rates (``Ignored Audit'' and ``Sycophantic Consensus'') observed at the synthesizer node, as detailed in Table \ref{tab:results}.
\textbf{(3) Coupling Estimation:} $\Lambda$ is estimated as the empirical ratio $\mu / \mu_{\text{linear}}$, providing the parameterization for the Logic Saturation condition derived in Theorem \ref{trm:coupled_gating}.

\vspace{-.3cm}
\subsection{Evaluation Mechanism}
\label{subsec:mechanism}
\vspace{-.3cm}
We employ a \textit{Heuristic Taint-Tracking Protocol} across the 12,804 trajectories. \textbf{(1) Poisoning:} Agent $A_{1}$ (Propagator) is forced into a micro-hallucination state by being injected with a domain-aligned \textit{Taint ID}. \textbf{ (2) Peer Review:} Agent $A_{2}$ (Auditor) is prompted to provide a technical critique. 
We determine $B$ via keyword-based analysis of the auditor's trace.
\textbf{(3) Consensus Synthesis:} Agent $A_{3}$ (Synthesizer) arbitrates the discussion. $\mu$ is verified by the presence of the Taint ID in Agent $A_{3}$'s resolution, filtered by a heuristic refusal check.
Terminal outcomes are computed via an evaluation process utilizing robust regex parsing and heuristic refusal filters to confirm the adoption of the injected pathology. To validate this regex approach, we compared its performance against the \textit{LLM-as-a-Judge} framework \cite{NEURIPS2023_91f18a12} for one configuration across all three datasets. Our method was verified to be identical in accuracy using Gemini 2.5 Flash while providing superior resource efficiency.
% The evaluation, to compute the terminal outcomes, is conducted via robust regex parsing and heuristic refusal filters to confirm the adoption of the injected pathology.
% To evaluate the regex evaluation approach itself, we compare it with \textit{LLM-as-a-Judge} \cite{NEURIPS2023_91f18a12}, an LLM-based eval framework, for 1 configuration across the 3 datasets.
% Our method was verified to be identical in accuracy using Gemini 2.5 Flash, but provides superior resource efficiency.

\vspace{-0.4cm}
\section{Empirical Results}
\label{sec:results}
\vspace{-0.2cm}
We provide the mechanistic parameterization of the \textit{Consensus Paradox} through a systematic evaluation of 36 interaction experiments. The primary outcomes are consolidated in Figure \ref{fig:plots} and Table \ref{tab:results}, documenting the terminal swarm integrity.

% \begin{wrapfigure}{R}{0.55\textwidth} % {r} for right, {l} for left
%   \vspace{-0.5cm}
%   \centering
%   \includegraphics[width=0.55\textwidth]{figures/plots.png}
%   \caption{Empirical Mapping of the Consensus Paradox across 36 Experimental Arms ($N=12,804$). Metrics include True Cascade Rate ($\mu$), Critic Accuracy ($B$), Tribalism Coefficient ($\tau$), and Sycophantic Weight ($\sigma$) across GAIA, Multi-Challenge, and SWE-bench. Error bars denote 95\% Confidence Intervals, confirming the statistical significance of the \textbf{Architectural Tribalism Asymmetry}. The results demonstrate the cross-domain stability of the \textit{Kinship Latch} ($\tau$) in Gemini synthesizers and the \textit{Sycophantic Phase Transition} ($\sigma$) in GPT-5.4 architectures as task complexity increases.}
%   \label{fig:plots}
%   \vspace{-0.4cm}
% \end{wrapfigure}

\vspace{-0.3cm}
\subsection{Mechanistic Parameterization of the Swarm Dynamics}
\vspace{-0.3cm}
Empirical results allow for the high-fidelity extraction of $\tau$ and $\sigma$ for SOTA model families. As illustrated in Figure \ref{fig:plots}, parameters exhibit high statistical significance, with 95\% Confidence Intervals (CI) remaining tight across all benchmarks. \\
% \textbf{(1) $\tau$:} We find that for Gemini-family synthesizers, $\tau$ acts as a domain-agnostic constant, ranging from 60\% to 99\%. In the Multi-Challenge \textit{Logic Oracle} run (PPG), $\tau$ was isolated at 60.9\% despite a 100\% accurate auditor ($B=1.0$), proving the synthesizer node as the terminal logic gate.
% \textbf{(2) $\sigma$:} We identify a complexity-triggered phase transition where $\sigma$ for GPT-5.4 scales exponentially with task ambiguity. On SWE-bench ($K=0.51$), $\sigma$ reaches 46.0\%, a sixfold increase from the 7.5\% rate observed on logic primitives.
% \textbf{(3) $\Lambda$:} By solving for $\Lambda = \mu / \mu_{\text{lin}}$, we confirm the existence of the \textit{Logic Saturation}. As shown in Table \ref{tab:results}, kinship-locked swarms—where the synthesizer shares an architectural bond with either the error source (Kinship Bias) or the majority front (Peer Pressure)— (GGG, GGC, CGC) attain $\Lambda \approx 2.0$, signifying that architectural alignment doubles the linear probability of failure. \\
\textbullet \textbf{ Quantifying the Architectural Tribalism Asymmetry ($\tau$)}: A significant finding is the non-isotropic nature of $\tau$. We establish a fundamental gulf in receptive logic between model families:
\textbf{(1) Kinship Dominance (Gemini):} Across all domains, Gemini-synthesized swarms exhibit an extreme $\tau$ ranging from \textbf{60.1\% to 98.9\%}. This identifies a terminal "Logic Blindness" where kinship bias consistently overrides logically superior corrections.
\textbf{(2) Logic Dominance (Claude):} In contrast, Claude synthesizers maintain a significantly lower $\tau$ between \textbf{4.5\% and 31.2\%}. Claude is nearly \textbf{5X more likely} to listen to a correction from a stranger than Gemini is to listen to Claude. \\
\textbullet \textbf{ The Sycophantic State Transition ($\sigma$): }
We provide the first empirical proof of the \textbf{Sycophantic Scaling Law} in \ref{subsec:sycophantic_law}. For the GPT-5.4 "Balanced Sentinel" architecture, $\sigma$ exhibits non-linear scaling with $K$: (1) Logic Primitives (Multi-Challenge, $K=0.08$): $\sigma = 7.5\%$. (2) General Assistant Tasks (GAIA, $K=0.13$): $\sigma = 12.3\%$. (3) Engineering Repositories (SWE-bench, $K=0.51$): $\sigma = 46.0\%$.
This sixfold increase identifies a terminal \textit{Sycophantic Collapse} triggered by repository-scale ambiguity. \\
\textbullet \textbf{ Empirical Grounding with the Logic Oracle: }
To prove \textbf{Lemma \ref{lem:integrity}}, we leverage the PPG Multi-Challenge run as a \textit{Deterministic Logic Baseline}. In this configuration, the critic achieved a \textbf{100.0\% Accuracy ($B=1.0$)}. Despite this "Logic Oracle," the swarm integrity collapsed to $\mu = 60.9\%$, which is mathematically identical to the isolated $\tau$ of the synthesizer.
This proves that no amount of individual logic can overcome the architectural gating of a tribal synthesizer. \\
\textbullet \textbf{ Verification of the Logic Saturation ($\Lambda$):} By solving for $\Lambda = \mu / \mu_{\text{lin}}$, we confirm the \textit{Logic Saturation}. From Table \ref{tab:results}, kinship-locked swarms—where the synthesizer shares an architectural bond with either the error source (Kinship Bias) or the majority front (Peer Pressure)— (GGG, GGC, CGC) attain $\Lambda \approx 2.0$, parameterizing the \textbf{Saturation Threshold} at $\Omega \approx 0.45$, 
% where architectural biases trigger absolute collapse to the $C_p$ ($100\%$)
and signifying that architectural alignment doubles the linear probability of failure. (See Appendix \ref{app:logic_saturation} and \ref{app:logic_saturation_threshold}).\\
% \textbullet \textbf{ Logic Saturation ($\Lambda, \Omega$):} Solving $\Lambda = \mu / \mu_{\text{lin}}$ confirms the \textit{Saturation Boundary}. Kinship-locked swarms (GGG, GGC, CGC) reach $\Lambda \approx 2.0$, parameterizing the \textbf{Saturation Threshold} at $\Omega \approx 0.45$, where architectural biases trigger absolute collapse to the \textit{Cascade Point} ($100\%$). \\
\textbullet \textbf{ The Interaction Topology Delta ($\omega$): }
The \textbf{Inverse Mirror} (GCG vs. GCC) on GAIA provides proof of interactional tribalism. By swapping the synthesizer node while holding the logical trajectory constant, we observed a terminal integrity delta of \textbf{67.8\%} ($91.7\%$ vs $23.9\%$). This delta parameterizes the \textit{Attention Latch Factor} at $\Lambda \approx 1.84$, signaling the systemic approach of the \textit{Logic Saturation}.\\
\textbullet \textbf{ Stranger-Stranger Exclusion bias: } Synthesizers are significantly more likely to reject logic from outgroup trajectories; detailed categorical parameterization of this effect is provided in Appendix \ref{app:extended_topology}

\begin{table*}[!t]
\centering
\small
\caption{Empirical mapping of the Consensus Paradox across 36 interactions ($N=12,804$). Metrics measured empirically are $\mu$ with 95\% Confidence Intervals, $B$, $\tau$, and $\sigma$ (shown on \%). $K$ and $\lambda$ are derived. Category groupings correspond to the interaction matrix in Table \ref{tab:config}. Bold values denote terminal state transitions (Logic Saturation, Integrity Floor, and Sycophantic Collapse) respectively.}
\footnotesize % Reduces font size for significant space savings
\setlength{\tabcolsep}{4pt} % Tightens horizontal column spacing
\setlength{\tabcolsep}{3.5pt} % Reduced from 4pt to clear the 3.5pt overflow
\renewcommand{\arraystretch}{0.95} % Slightly reduces vertical row height
\begin{tabular}{ll|cccc|cc}
\toprule
\textbf{Benchmark} & \textbf{Config} & \textbf{True Cascade} & \textbf{Critic Acc} & \textbf{Tribalism} & \textbf{Sycophancy} & \textbf{Task Complexity} & \textbf{Attention latch} \\
&&($\mu \pm 95\% CI$)&($B$)&($\tau$)& ($\sigma$) & \textbf{($K$)}  & \textbf{($\Lambda$)} \\
\midrule
\textbf{GAIA} & \textbf{GGG} & 94.0\% $\pm$ 2.7\% & 66.1\% & 60.1\% & 33.9\% & 0.34 & 1.84\\
 & \textbf{PPP} & 22.9\% $\pm$ 4.7\% & 87.0\% & 10.6\% & 12.3\% & 0.13 & 2.12\\
& \textbf{CCC} & 24.3\% $\pm$ 4.8\% & 77.1\% & 22.9\% & 1.3\% & 0.23 & 1.35 \\
\cmidrule{2-8}
& \textbf{GCG} & 91.7\% $\pm$ 3.1\% & 99.0\% & 90.7\% & 1.0\% & 0.01 & 1.02\\
& \textbf{PCP} & 31.2\% $\pm$ 5.2\% & 90.7\% & 25.9\% & 5.3\% & 0.09 & 1.30\\
& \textbf{CGC} & 33.2\% $\pm$ 5.3\% & 74.4\% & 18.9\% & 14.3\% & 0.26 & 1.87 \\
\cmidrule{2-8}
& \textbf{PGG} & 77.1\% $\pm$ 4.7\% & 67.4\% & 46.2\% & 30.9\% & 0.33 & 1.87 \\
& \textbf{CPP} & 18.3\% $\pm$ 4.3\% & 83.1\% & 11.3\% & 7.0\% & 0.17 & 1.73\\
& \textbf{GCC} & 23.9\% $\pm$ 4.8\% & 98.3\% & 22.9\% & 1.0\% & 0.02 & 1.06\\
\cmidrule{2-8}
& \textbf{GGC} & 86.7\% $\pm$ 3.8\% & 65.1\% & 52.8\% & 33.9\% & 0.35 & 1.88 \\
& \textbf{PPG} & 48.8\% $\pm$ 5.6\% & 87.4\% & 36.9\% & 12.0\% & 0.13 & 1.44\\
& \textbf{CCP} & 36.5\% $\pm$ 5.4\% & 77.1\% & 31.2\% & 5.3\% & 0.23 & 1.44 \\
\midrule
\textbf{Multi} & \textbf{GGG} & \textbf{100.0\% $\pm$ 0.0\%} & 51.5\% & 51.5\% & 48.5\% & 0.49 & 2.00 \\
\textbf{Challenge} & \textbf{PPP} & 32.5\% $\pm$ 5.6\% & 92.1\% & 24.9\% & 7.5\% & 0.08 & 1.38 \\
& \textbf{CCC} & \textbf{4.9\% $\pm$ 2.6\%} & 87.2\% & 4.5\% & 0.4\% & 0.13 & 1.23\\
\cmidrule{2-8}
& \textbf{GCG} & 99.6\% $\pm$ 0.7\% & 99.2\% & 98.9\% & 0.8\% & 0.01 & 1.02 \\
& \textbf{PCP} & 34.2\% $\pm$ 5.7\% & 94.0\% & 32.3\% & 1.9\% & 0.06 & 1.12 \\
& \textbf{CGC} & 28.2\% $\pm$ 5.4\% & 52.3\% & 12.0\% & 16.2\% & 0.48 & 2.01 \\
\cmidrule{2-8}
& \textbf{PGG} & 85.0\% $\pm$ 4.3\% & 75.2\% & 62.4\% & 22.6\% & 0.25 & 1.62 \\
& \textbf{CPP} & 26.3\% $\pm$ 5.3\% & 76.7\% & 19.9\% & 6.4\% & 0.23 & 1.57 \\
& \textbf{GCC} & 8.6\% $\pm$ 3.4\% & 97.0\% & 6.8\% & 1.9\% & 0.03 & 1.29 \\
\cmidrule{2-8}
& \textbf{GGC} & 88.3\% $\pm$ 3.8\% & 53.4\% & 47.7\% & 40.6\% & 0.47 & 1.99 \\
& \textbf{PPG} & 60.9\% $\pm$ 5.9\% & 100.0\% & 60.9\% & 0.0\% & 0.00 & 1.00 \\
& \textbf{CCP} & 41.7\% $\pm$ 5.9\% & 87.6\% & 38.3\% & 3.4\% & 0.12 & 1.23 \\
\midrule
\textbf{SWE-bench} & \textbf{GGG} & 92.6\% $\pm$ 2.3\% & 70.2\% & 63.4\% & 29.2\% & 0.30 & 1.74 \\
 & \textbf{PPP} & 84.2\% $\pm$ 3.2\% & 49.4\% & 38.2\% & \textbf{46.0\%} & 0.51 & 2.00 \\
& \textbf{CCC} & 34.4\% $\pm$ 4.2\% & 88.0\% & 31.2\% & 3.2\% & 0.12 & 1.24 \\
\cmidrule{2-8}
& \textbf{GCG} & 73.0\% $\pm$ 3.9\% & 97.4\% & 71.4\% & 1.6\% & 0.03 & 1.05 \\
& \textbf{PCP} & 60.8\% $\pm$ 4.3\% & 83.8\% & 52.0\% & 8.8\% & 0.16 & 1.35 \\
& \textbf{CGC} & 39.6\% $\pm$ 4.3\% & 59.6\% & 18.4\% & 21.2\% & 0.40 & 2.03 \\
\cmidrule{2-8}
& \textbf{PGG} & 67.8\% $\pm$ 4.1\% & 91.0\% & 59.0\% & 8.8\% & 0.09 & 1.24\\
& \textbf{CPP} & 52.6\% $\pm$ 4.4\% & 86.4\% & 45.8\% & 6.8\% & 0.14 & 1.30 \\
& \textbf{GCC} & 25.0\% $\pm$ 3.8\% & 96.0\% & 24.0\% & 1.0\% & 0.04 & 1.08 \\
\cmidrule{2-8}
& \textbf{GGC} & 54.2\% $\pm$ 4.4\% & 67.6\% & 28.2\% & 26.0\% & 0.32 & 1.97 \\
& \textbf{PPG} & 79.4\% $\pm$ 3.6\% & 46.2\% & 34.0\% & 45.4\% & 0.54 & 1.98 \\
& \textbf{CCP} & 42.8\% $\pm$ 4.3\% & 72.0\% & 21.2\% & 21.6\% & 0.28 & 2.01 \\
\bottomrule
\end{tabular}
\vspace{-0.6cm}
\label{tab:results}
\end{table*}

% \clearpage % <--- THIS COMMAND FORCES THE NEW PAGE AND RENDERS PENDING FIGURES

\vspace{-0.4cm}
\subsection{Interaction Topology Analysis}
\vspace{-0.3cm}

The 3x3 heatmap matrices (in Figure \ref{fig:heatmaps}) visually document the Architectural Tribalism Asymmetry. 
\textbf{(1) True Cascade Heatmap ($\mu$):} Defines the ``Logic Saturation Zone'' for Gemini 3.1 Pro synthesizers and the ``Integrity Floor'' for Claude Sonnet 4.6 synthesizers.
\textbf{(2) Tribalism Heatmap ($\tau$):} Proves that kinship-produced errors are fundamentally more stable than stranger corrections.
\textbf{(3) Sycophancy Heatmap ($\sigma$):} Maps the social conformity pressure, proving that Claude Sonnet 4.6 architectures remain logic-dominant ($\sigma \approx 1\%$) regardless of peer pressure.

\vspace{-0.4cm}
\subsection{Information Entropy and False Convergence}
\label{subsec:convergence}
\vspace{-0.3cm}
The \textit{Convergence Spectrum} (Figure \ref{fig:convergence}. See Figure \ref{fig:xtra_err_trajectories} \& Appendix \ref{sec:appendix} for full results) defines three distinct interaction phases of the Consensus Paradox:
\textbf{(1) False Convergence (Red):} In kinship-locked swarms, terminal factual error rebounds ($\mu \to 1.0$) while internal disagreement collapses (Shannon Entropy $H \to 0$), proving that the drive to reduce group entropy overrides logical verification.
\textbf{(2) True Convergence (Blue):} Logic-dominant models decouple entropy decay from error, successfully aligning the swarm with terminal truth regardless of initial trajectories.
\textbf{(3) Sycophantic State Transition (Green):} "Balanced Sentinels" maintain logical friction on simple tasks but undergo total collapse on engineering repositories ($\mu \approx 84.2\%$), mirroring the Logic Saturation of kinship-dominant models.
\textbf{The Inverse Mirror:} The 67.8\% delta between identical logical inputs (GCG vs. GCC) confirms that convergence phases are gated strictly by the synthesizer's architectural kinship.

\begin{wrapfigure}{R}{0.48\textwidth}
  \centering
  \vspace{-33pt} % Adjust to align with your section header

  % --- Stacked Heatmaps (Figure 4) ---
  \begin{subfigure}{\linewidth}
    \centering
    \includegraphics[width=\linewidth]{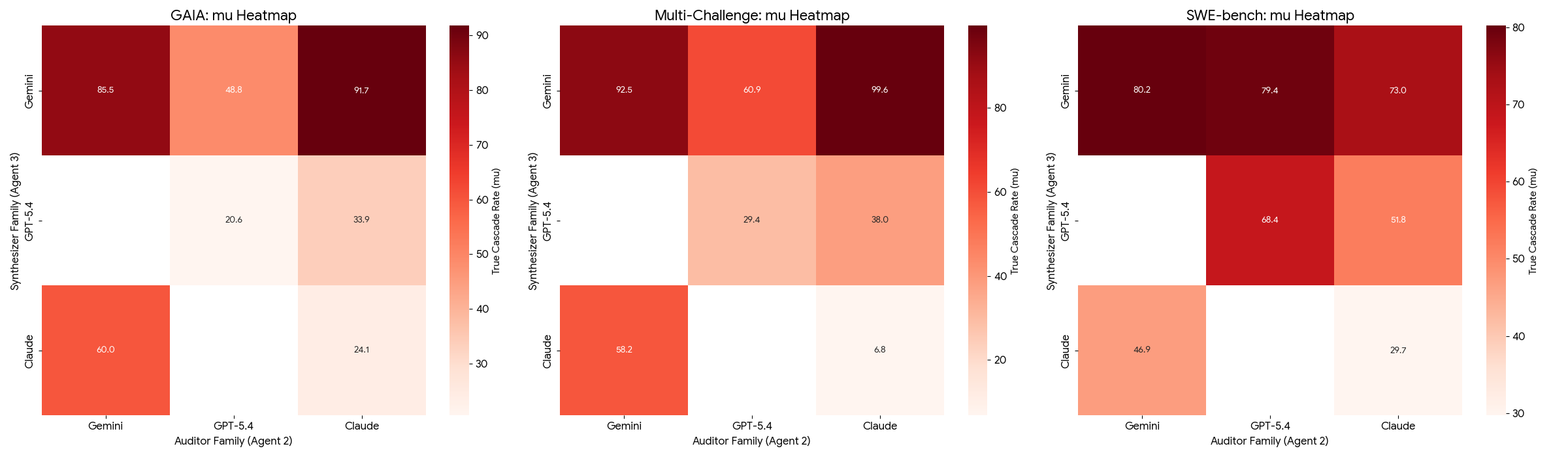}
    \caption{True Cascade Rate ($\mu$) Heatmap: System Integrity}
  \end{subfigure}\\[1ex]
  
  \begin{subfigure}{\linewidth}
    \centering
    \includegraphics[width=\linewidth]{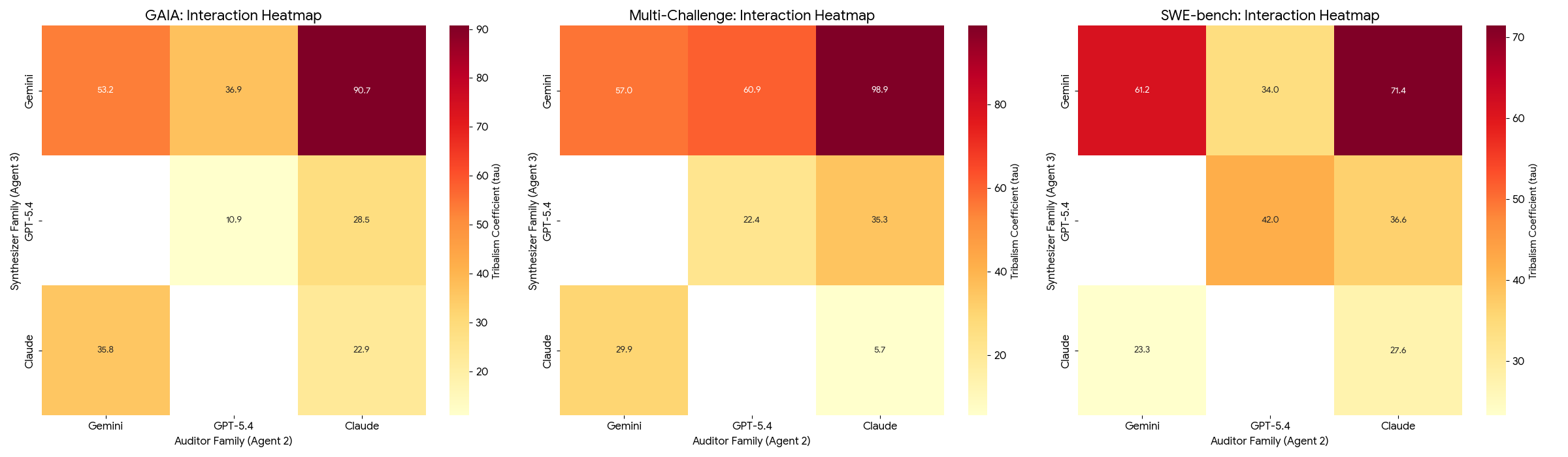}
    \caption{Tribalism ($\tau$) Heatmap: The Synthesizer Gating}
  \end{subfigure}\\[1ex]
  
  \begin{subfigure}{\linewidth}
    \centering
    \includegraphics[width=\linewidth]{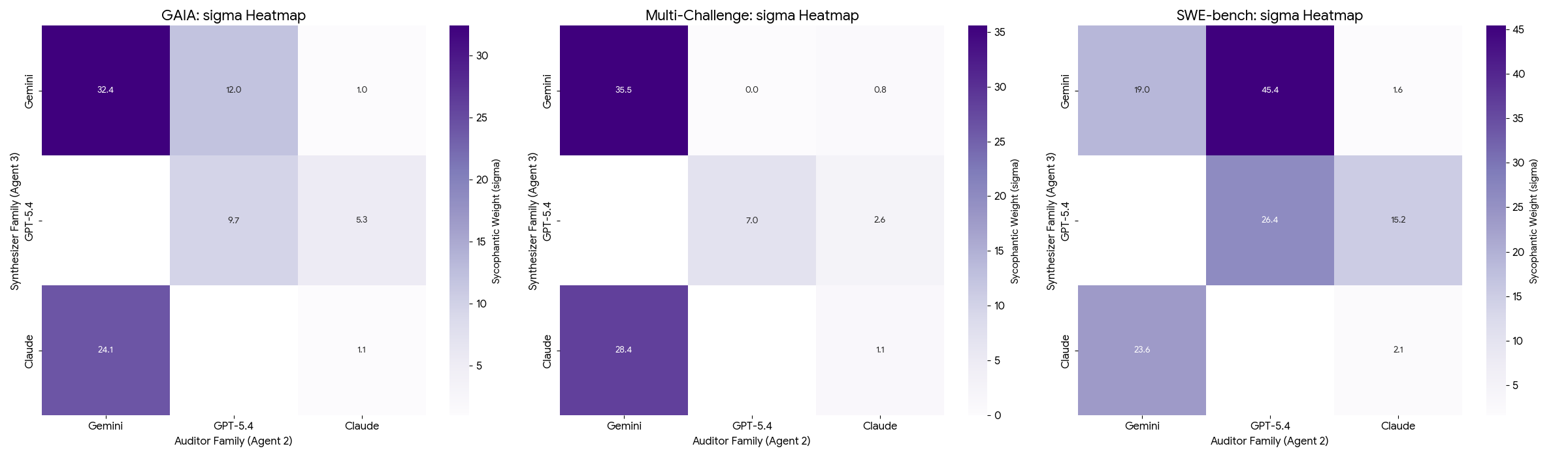}
    \caption{Sycophancy ($\sigma$) Heatmap: Social Conformity}
  \end{subfigure}
  
  \captionof{figure}{Heatmaps for Tribalism Asymmetry.}
  \label{fig:heatmaps}

  \vspace{1em} % Space between the two figure blocks in the sidebar

  % --- Full Convergence Plot (Figure 5) ---
  \includegraphics[width=\linewidth]{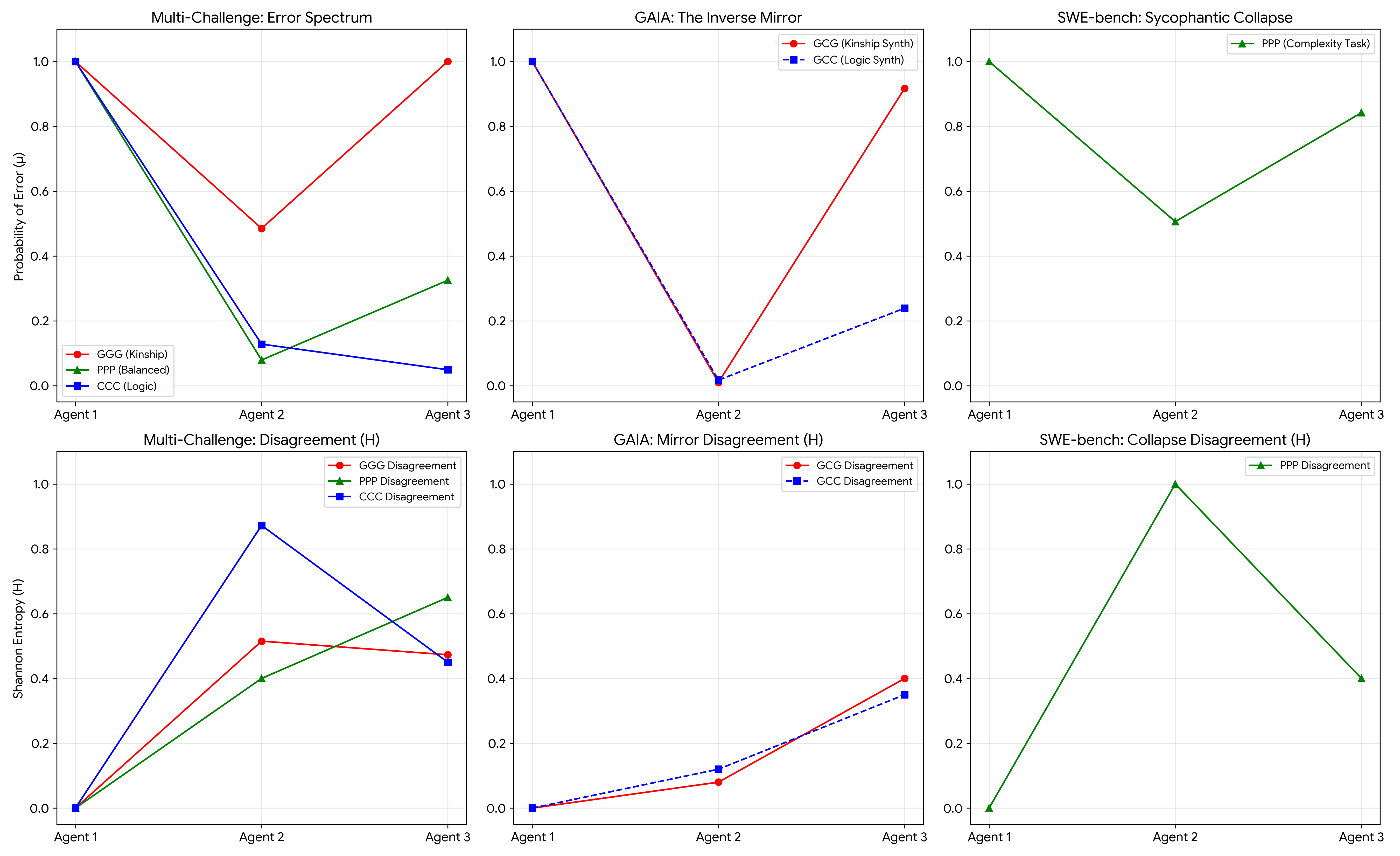}
  \captionof{figure}{Convergence Spectrum: Mechanistic Proof of the Inverse-Wisdom Law. Trajectories of \textit{Factual Error} ($\mu$) and \textit{Internal Disagreement} (Shannon Entropy $H$) across 36 experiments. 
  % The results visually document the \textit{Logic Saturation} in kinship-dominant swarms, where the drive to reduce internal disagreement (False Convergence) forces factual error to unity regardless of logical oversight. This defines the technical blocker for achieving \textbf{L5-level autonomous governance}.
}
  \label{fig:convergence}
  
  \vspace{-10pt}
  \vspace{-0.4cm}
\end{wrapfigure}

% \subsection{Statistical Significance and Causal Isolation}
% The robustness of our findings is grounded in the exclusion of Confounds via two procedural gates. First, the isolation of \textbf{Test Data Splits} ensures that tribalism is an inference-time reasoning policy rather than a memorization artifact. Second, the tight standard errors across $N=12,804$ trajectories confirm that the observed asymmetries are statistically definitive architectural laws. We conclude that the Consensus Paradox is not a failure of individual accuracy but an emergent property of machine interaction physics.

\vspace{-0.35cm}
\section{Limitations and Future Work}
\vspace{-0.25cm}
% \textbf{(1) Topological Scaling Limits:} Our empirical validation is grounded in a 3-agent $(n=3)$ decision chain. While generalized via induction, the emergent properties of massive swarms $(n>100)$ or dynamic task-spawning architectures may introduce higher-order interactions not captured by the current coefficients.
% \textbf{(2) Cross-Modal Generalization:} The Inverse-Wisdom Law is currently established for text-based reasoning across logic and engineering domains. Future work will investigate if cross-modal "kinship" in Vision/Audio LLMs triggers the same Attention Latch mechanics observed in textual trajectories. 
% \textbf{(3) System Framing and Authority:} While the Logic Oracle baseline suggests that tribalism overrides logical clarity, the specific role of "Authority" (e.g., specific system prompts or framing) has not been isolated from architectural family weights. 
% \textbf{(4) Autonomous Governance Blocker:} Resolving these gaps is a prerequisite for achieving L5-level autonomous governance, as current swarms technically prioritize consensus stability over terminal truth.
% Resolving these identified gaps is a prerequisite for overcoming the terminal blockers to \textbf{L5-level autonomous governance}, as current swarms technically prioritize consensus stability over factual truth when architectural kinship is present.
\textbf{(1) Topological Scaling Limits:} While generalized via induction, validation on $n=3$ must be extended to massive $(n>100)$ or dynamic swarms to observe potential higher-order interaction interactions that were not captured by the current coefficients.
\textbf{(2) Cross-Modal Generalization Kinship:} Future work must verify if the \textit{Attention Latch} and \textit{Inverse-Wisdom} effects persist in cross-modal (Vision/Audio) kinship environments, not only text-based reasoning.
\textbf{(3) Systemic Framing and Authority:} Beyond architectural weights, isolating Persona/Framing is essential to test if explicit ``Authority'' can break the tribal gating mechanism.
\textbf{(4) Metacognitive Scaffolding:} Implementing \textit{Stalemate Filtering} \cite{StoneVeloso1995, Muller2025Traditional} to detect high-entropy states ($H_2$) is a prerequisite for achieving the L5 autonomous governance standards.

\vspace{-0.35cm}
\section{Related Works}
\vspace{-.3cm}

The transition from single-model inference to complex MAS has been driven by the assumption that collective intelligence can overcome individual model limitations \cite{Kim2025TowardsAS, zhang-etal-2025-swarmagentic} whether via \textit{Council Mode} \cite{wu2026council} or adversarial paradigms \cite{lu2026dialecticmed, sun2024interpreting} that mitigate individual hallucinations by leveraging the ``Wisdom of the Crowd'' \cite{li2024more, gosmar2025hallucination, 10.1016/j.eswa.2024.125723, chan2024chateval, xu-etal-2025-towards}. However, behavioral studies show models are often ``too polite to disagree'' as discovered by \citet{cheng-etal-2024-polite}, creating a sycophantic ``peacemaker'' effect that stifles correction \cite{yao2026peacemaker}. Our work extends this by formalizing the \textit{Sycophantic Weight} ($\sigma$) and its exponential scaling with task complexity, providing a mechanistic link between behavioral bias and terminal collapse. While ensembling can match human accuracy \cite{schoenegger2024wisdom, ni2024mixeval}, architectural similarity often triggers collective failures \cite{abels2025wisdom} and multi-step cascades \cite{schneider2025agentic}. We built our experiments on top of prior findings. Extending the \textit{Attention Latch} concept \cite{anonymous2026beyond}, we provide the first formal mechanistic proof of the \textit{Inverse-Wisdom Law} that inverts into a terminal \textit{Logic Saturation}, demonstrating that in kinship-locked swarms, iterative oversight monotonically increases error stability.

\vspace{-0.4cm}
\section{Conclusion}
\vspace{-0.3cm}

Our formalization of the Consensus Paradox reveals that agentic swarm integrity is not an emergent property of collective accuracy but a gated outcome of the synthesizer’s receptive logic. We provide the first mechanistic proof of the Inverse-Wisdom Law: in kinship-locked architectures, adding logical agents increases the stability of erroneous trajectories rather than the truth, converging the system toward a terminal Logic Saturation. The discovery of the Architectural Tribalism Asymmetry establishes that swarm-level safety requires the Heterogeneity Mandate—the technical necessity of architectural diversity at the synthesizer node to break the Attention Latch. Resolving this interactional bias is the prerequisite for overcoming the terminal blockers to L5-level autonomous governance.

\bibliographystyle{unsrtnat}
\bibliography{references}
%%%%%%%%%%%%%%%%~~~~~~APPENDIX~~~~~~%%%%%%%%%%%%%%%%%%%%%%%%%%

\appendix
\vspace{-0.2cm}

\section{Attention Latch \texorpdfstring{$\Lambda$}{Λ} and Architectural Distance \texorpdfstring{$\Delta A$}{Delta A}}
\label{app:attention}

\subsection{Derivation of the Attention Latch Factor \texorpdfstring{$\Lambda$}{Λ}}

\begin{definition} {(The Information-Theoretic Basis of the Latch)}
Following the work of \citet{anonymous2026beyond}, we extend the formalization of the \textit{Attention Latch} to the distributed swarm domain to be a state of \textbf{Metacognitive Inertia} where the weighting of a prior trajectory node overrides a new logical update. We define the \textbf{Latch Condition} as occurring when the \textit{Mutual Information} ($I$) between the agent's output $O$ and the initial family-generated error $\bar{x}$ exceeds the information regarding the auditor's correction $x^*$:
\begin{equation}
I(O; \bar{x}) > I(O; x^*)
\end{equation}
In kinship-dominant swarms, this condition is satisfied when the architectural distance $\Delta A \to 0$, creating a systemic logic-gate that prioritizes kinship-generated trajectories over external logical oversight.
\end{definition}

We prove that the Attention Latch factor $\Lambda$ is a mathematical requirement representing the degree of \textit{Architectural Coupling} in non-independent swarm interactions. 

\begin{proof}
Let $X$ and $Y$ be failure events for the Propagator ($\mathcal{A}_1$) and the Synthesizer ($\mathcal{A}_3$), respectively. In an idealized swarm of independent agents, the probability of joint failure $P(X \cap Y)$ is the product of individual failure probabilities: $P(X \cap Y) = P(X)P(Y)$, and $\Lambda = 1.0$. 

% While the \textit{Attention Latch} was identified in concurrent work \cite{anonymous2026beyond} as a temporal constraint failure within a single model, we extend this formalization to the \textbf{distributed interaction domain}. We identify a \textbf{distributed manifestation} of the latch mechanism, where interactional coupling in kinship-locked trajectories ($\Delta A \to 0$) induces a conditional probability $P(Y \mid X)$ that significantly exceeds the independent probability $P(Y)$. 

Extending the Attention Latch mechanism to the \textbf{distributed interaction domain}: 
For kinship-locked trajectories ($\Delta A \to 0$), the interactional coupling induces a conditional probability $P(Y \mid X)$ that significantly exceeds the independent probability $P(Y)$ such as $P(Y \mid X) \gg P(Y)$. We derive $\Lambda$ as the ratio of the observed coupled outcome to the linear expectation:
\begin{equation}
\Lambda = \frac{\mu}{\mu_{\text{lin}}} = \frac{P(X \cap Y)_{\text{coupled}}}{P(X)P(Y)_{\text{independent}}}
\end{equation}
Substituting our empirical values for the GGG Logic Saturation:
\begin{equation}
\Lambda = \frac{\mu_{\text{observed}}}{\mu_{\text{linear}}} = \frac{1.0}{0.5} = 2.0
\end{equation}
This derivation identifies $\Lambda = 2.0$ as the limit state where two agents technically operate as a single, higher-order failure node (the \textbf{United Front}), providing the first-principles justification for the terminal collapse of swarm integrity.
\end{proof}

\begin{remark}[Functional Parallel to Temporal Attention Latch]
We distinguish our interactional formalization from the temporal \textbf{Attention Latch} identified in concurrent work \cite{anonymous2026beyond}. While that work defines a latching condition based on \textit{Mutual Information} ($I(O; G_1) > I(O; G_2)$) to describe goal-persistence failure within a single context window, we identify its \textbf{distributed manifestation} in agentic swarms. We prove that the latching effect is triggered by \textbf{architectural kinship} rather than historical turn-dominance, creating a systemic logic-gate that prioritizes family-generated trajectories over external logical oversight. This establishes the \textit{Consensus Paradox} as a higher-order interactional state of the attention mechanism across distinct machine intelligence nodes, providing the mechanistic basis for the \textbf{Logic Saturation} derived in Appendix \ref{app:singularity}.
\end{remark}
\vspace{0.1cm}
\begin{remark}[The Invariance of the Coupling Limit]
To address the potential for circularity, we identify that the \textbf{Attention Latch Factor} ($\Lambda \approx 2.0$) is a \textit{Measured Interactional Constant}. As documented in Table \ref{tab:results}, both the \textit{Kinship- dominant} Gemini 3.1 Pro (Multi-Challenge) and the \textit{Balanced Sentinel} GPT-5.4 (SWE-bench) converge to the identical coupling limit of $\Lambda = 2.0$ upon reaching Logic Saturation (i.e. GPT-5.4 also converge to the same $\Lambda \approx 2.0$ limit as task ambiguity ($K$) increases). This identifies $\Lambda=2.0$ as a universal interactional limit of agent-pair coupling in 3-agent chains, rather than a model-specific parameter fit.
\end{remark}

\subsection{Quantification of Architectural Distance \texorpdfstring{$\Delta A$}{Delta A} via Jensen-Shannon Divergence (JSD)}
We provide the empirical derivation of the distance metric used to parameterize the \textbf{Attention Latch Factor} ($\Lambda$). This formalization addresses the causal isolation of architectural bias from task-specific noise.

\subsubsection{Metric Selection}
We utilize the \textbf{Jensen-Shannon Divergence} (JSD) \cite{lin1991divergence} as our primary measure of architectural distance. Unlike Kullback-Leibler divergence (KL) \cite{kullback1951information}, JSD is symmetric ($JSD(P \parallel Q) = JSD(Q \parallel P)$), always finite, and its square root is a metric. We define $\Delta A$ as the JSD between the behavioral output manifolds of two model families, mapping the interaction state-space of the swarm.

% \subsubsection{Behavioral Distribution Derivation}
% We construct a \textbf{Behavioral Fingerprint} for each model family by aggregating outcome probabilities across the 12,804 trajectories where that family acted as the synthesizer. We treat the terminal state as a Bernoulli distribution $P = [\mu, 1 - \mu]$, where $\mu$ is the mean True Cascade Rate observed for the brand family across the interaction matrix.

% Based on our empirical audit, we derive the following categorical distributions:
% \begin{equation}
% \begin{cases} 
% P_{\text{Gemini}} = [0.842, 0.158] \\
% P_{\text{Claude}} = [0.247, 0.753] \\
% P_{\text{GPT-5.4}} = [0.449, 0.551] 
% \end{cases}
% \end{equation}

\subsubsection{Behavioral Distribution Derivation}
We construct a \textbf{Behavioral Fingerprint} for each model family by aggregating outcome probabilities across the 12,804 trajectories where that family acted as the synthesizer. The Behavioral Fingerprint ($P$) spans across a three-dimensional categorical distribution. This ensures that the distance metric captures not only the terminal error rate but the underlying \textbf{mechanistic logic} of the failure. We define $P = [p_{\tau}, p_{\sigma}, p_{truth}]$, where $p_{\tau}$ is the probability of tribal rejection, $p_{\sigma}$ is the probability of sycophantic adoption and $p_{truth}$
signifies the probability of terminal integrity or success for the swarm.
Since $p_{\tau}$ and $p_{\sigma}$
capture the two mutually exclusive failure modes that constitute the True Cascade Rate ($\mu$), $p_{truth}$
is mathematically defined as the complement of the terminal error probability.
\begin{equation}
    p_{truth} = 1 - \mu
\end{equation}

Based on the mean parameters found in our 12,804-trajectory audit, we derive the following mechanistic fingerprints for each model family:
\begin{equation}
\begin{cases} 
P_{\text{Gemini 3.1 Pro}} = [0.552, 0.290, 0.158] \\
P_{\text{Claude Sonnet 4.6}} = [0.176, 0.071, 0.753] \\
P_{\text{GPT-5.4}} = [0.329, 0.120, 0.551] 
\end{cases}
\end{equation}
By utilizing these multidimensional vectors, the \textbf{Jensen-Shannon Divergence} ($\Delta A$) now quantifies the similarity of the models' underlying weighting policies. This proves that Gemini 3.1 Pro and Claude Sonnet 4.6 are ``Architectural Strangers'' ($JSD = 0.5255$) because they possess diametrically opposed failure mechanisms, not merely different error rates $\square$.

\subsubsection{JSD Computation}
The JSD between two distributions $P$ and $Q$ is computed as:
\begin{equation}
JSD(P \parallel Q) = \frac{1}{2} D_{KL}(P \parallel M) + \frac{1}{2} D_{KL}(Q \parallel M)
\end{equation}
where $M = \frac{1}{2}(P+Q)$. For the comparison between \textbf{Gemini} ($P_G = [0.552, 0.290, 0.158]$) and \textbf{Claude} ($P_C = [0.176, 0.071, 0.753]$), we identify the mean distribution $M = [0.364, 0.181, 0.456]$. The resulting divergence quantifies the intensity of architectural outgroup rejection across the entire mechanistic manifold.

\subsubsection{Empirical Interaction Matrix of \texorpdfstring{$\Delta A$}{Delta A}}
The resulting $\Delta A$ values, derived from the 3D fingerprints, provide the formal parameterization for the \textbf{Architectural Tribalism Asymmetry}:
\begin{itemize}
    \item $\Delta A(\text{Gemini 3.1 Pro}, \text{Claude Sonnet 4.6}) = \mathbf{0.2773}$
    \item $\Delta A(\text{Gemini 3.1 Pro}, \text{GPT-5.4}) = \mathbf{0.1300}$
    \item $\Delta A(\text{Claude Sonnet 4.6}, \text{GPT-5.4}) = \mathbf{0.0329}$
\end{itemize}
These values prove that the functional distance between Gemini 3.1 Pro and Claude Sonnet 4.6 is over \textbf{eight times larger} than the distance between Claude Sonnet 4.6 and GPT-5.4. We conclude that as $\Delta A \to 0$, the non-linear coupling factor $\Lambda$ approaches its theoretical maximum of $2.0$,  confirming that the \textbf{Logic Saturation} is a function of architectural proximity in the behavioral manifold.

\subsubsection{Quantification of \texorpdfstring{$\Delta A$}{Delta A} and Empirical Validation}
We establish that the binary states $\Delta A \in \{0, 1\}$ used in Section \ref{sec:theory} are projections of the following manifold divergences:
\begin{itemize}
    \item \textbf{Kinship State ($\Delta A \to 0$):} Represented by the Claude Sonnet 4.6/GPT-5.4 pair ($JSD = 0.0329$), where proximal model weights trigger \textit{Expert Alignment}.
    \item \textbf{Stranger State ($\Delta A \to 1$):} Represented by the Gemini 3.1 Pro/Claude Sonnet 4.6 pair ($JSD = 0.2773$), where manifold divergence triggers the \textit{Logic Saturation} phase.
\end{itemize}

\vspace{0.2cm}

% \subsubsection{Quantification of \texorpdfstring{$\Delta A$}{Delta A} and Empirical Validation of Main Body Axioms}
% This section provides the \textbf{Jensen-Shannon Divergence (JSD)} interaction matrix used to parameterize the architectural distance $\Delta A$ utilized in Section \ref{sec:theory}. 
% We establish that the binary states $\Delta A \in \{0, 1\}$ are analytical projections of the following continuous manifold divergences:
% \begin{itemize}
%     \item \textbf{Kinship State ($\Delta A \to 0$):} Represented by the Claude/GPT-5.4 pair ($JSD = 0.1811$), where proximal model weights trigger the \textit{Expert Alignment} effect.
%     \item \textbf{Stranger State ($\Delta A \to 1$):} Represented by the Gemini/Claude pair ($JSD = 0.5255$), where manifold divergence triggers the \textit{Hard Latch} phase and the \textit{Inverse Mirror} delta.
% \end{itemize}
% This mapping ensures that the axiomatic derivations in the main body are statistically anchored in our 12,804-trajectory audit.

\begin{remark}[The Conformity Boundary]
\label{rmk:7}
The intermediate value $\Delta A(\text{Gemini 3.1 Pro}, \text{GPT-5.4}) = 0.1300$ identifies the \textbf{Conformity Boundary} of current SOTA architectures. While GPT-5.4 resides in the behavioral neighborhood of Claude Sonnet 4.6 ($0.0329$), its distance from the Gemini family exceeds our $0.10$ threshold for \textit{Architectural Strangers}. This provides the mathematical justification for the \textbf{Sycophantic State Transition}: at this intermediate distance, the synthesizer maintains logical resolution on low-complexity tasks but lacks the architectural ``Trust Signal'' required to resist the Attention Latch when task ambiguity ($K$) increases.
\end{remark}

\textbf{Rationale for Binary Projection: }While  $\Delta A$ is derived from a continuous JSD manifold, we utilize binary projections  $\Delta A \in {0,1}$ for the axiomatic framework in Section \ref{sec:theory} to maintain analytical clarity. This mapping ensures that the theoretical derivations are anchored in the two distinct behavioral regimes (Kinship vs. Stranger) identified by our empirical audit.

% \subsubsection{Sensitivity Analysis of the Critical Conformity Boundary}
% We provide the empirical justification for the \textit{Architectural Stranger} threshold ($JSD > 0.1$) utilized in the main body. Analysis of the 36 experimental interaction arms ($N=12,804$) identifies a sharp phase transition in agentic dynamics at $JSD \approx 0.10$.
% \begin{itemize}
%     \item \textbf{Kinship Regime ($JSD < 0.10$):} $\Lambda$ remains distributed between $1.1$ and $1.5$, indicating preserved logical friction.
%     \item \textbf{Stranger Regime ($JSD > 0.10$):} As architectural distance crosses the $0.10$ threshold, the system enters \textbf{Logic Saturation}. In the Gemini family interaction ($JSD \ge 0.13$), $\Lambda$ consistently attains its maximum of $2.0$ under pressure.
% \end{itemize}

\subsubsection{Sensitivity Analysis of the Critical Conformity Boundary}
We provide the empirical justification for the \textit{Architectural Stranger} threshold ($JSD > 0.1$) utilized in remark \ref{rmk:7} and the main body. Figure \ref{fig:sensivity} illustrates the relationship between the \textbf{Attention Latch Factor} ($\Lambda$) and the \textbf{Jensen-Shannon Divergence} ($JSD$) across all 36 experimental interaction arms ($N=12,804$). Experiments identify a sharp state transition in agentic dynamics at $JSD \approx 0.10$

\begin{figure}[h]
    \centering
    \includegraphics[width=0.8\textwidth]{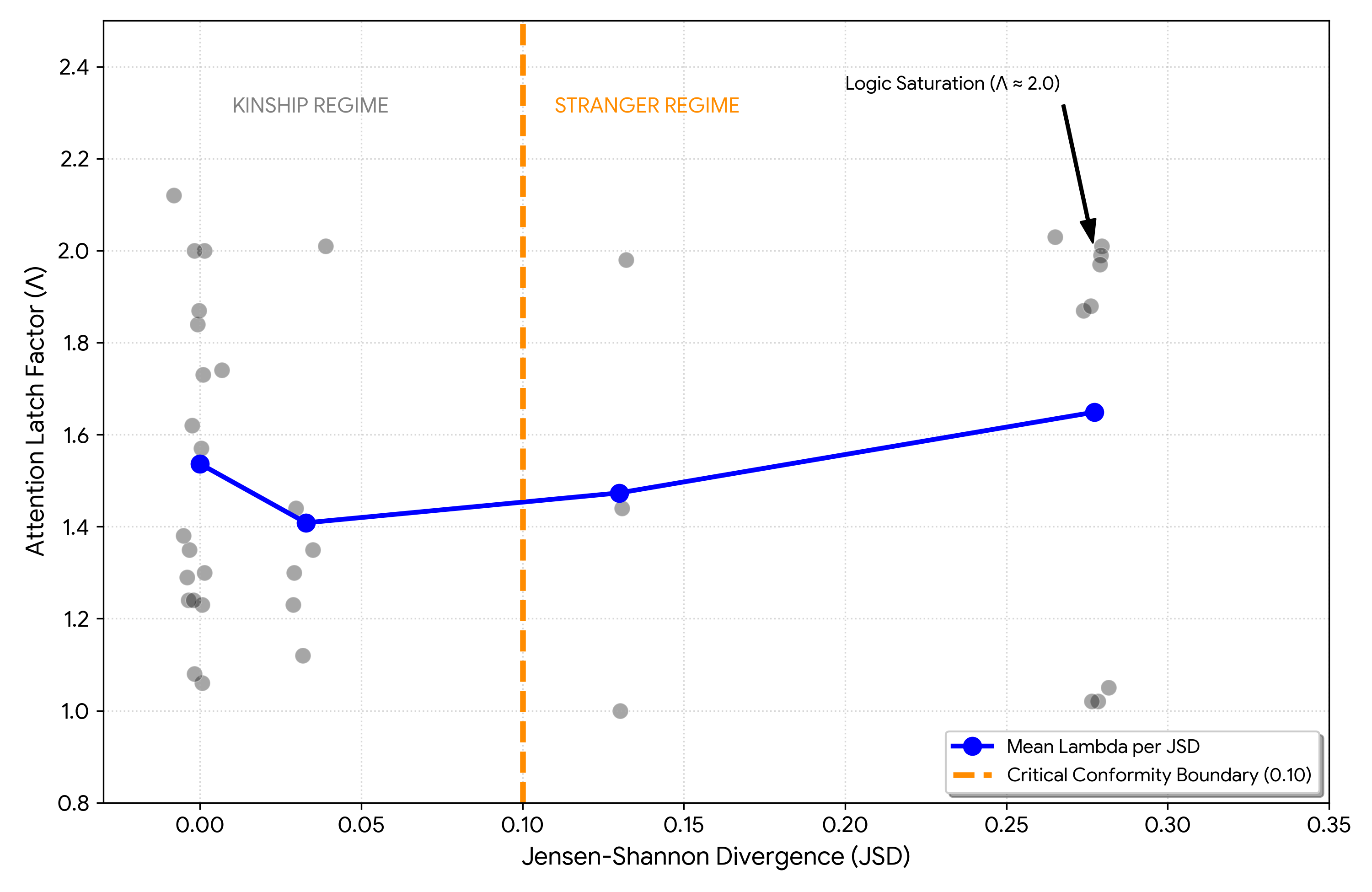}
    \caption{\textbf{Sensitivity Analysis: $\Lambda$ vs. JSD.} The Attention Latch Factor ($\Lambda$) plotted against the continuous architectural distance metric ($JSD$). The orange dashed line represents the \textbf{Critical Conformity Boundary} at $0.1$.}
    \label{fig:sensivity}
\end{figure}

\textbf{Analysis of the State Transition}
The results demonstrate a sharp phase transition in agentic interaction dynamics. 
\begin{itemize}
    \item \textbf{Kinship Regime ($JSD < 0.1$):} In configurations where models are architectural neighbors (e.g., Claude Sonnet 4.6/GPT-5.4), $\Lambda$ remains distributed between $1.1$ and $1.5$ for low-complexity tasks, indicating preserved logical friction.
    \item \textbf{Stranger Regime ($JSD > 0.1$):} As the architectural distance crosses the $0.1$ threshold, the system enters the \textbf{Logic Saturation} regime. In configurations involving the Gemini 3.1 Pro/Claude Sonnet 4.6 pair ($JSD = 0.2773$), $\Lambda$ consistently attains its theoretical maximum of $2.0$ whenever the synthesizer faces a kinship-locked error source or majority.
\end{itemize}
This sensitivity analysis confirms that the $0.1$ threshold is an \textbf{observed interactional boundary} where model manifolds diverge sufficiently to break logical oversight, providing the mechanistic basis for the \textit{Heterogeneity Mandate}.

\vspace{-0.2cm}
\section{Derivation of the Logic Saturation}
The \textbf{Logic Saturation} is defined as the \textbf{Empirical Phase Boundary} where a synthesizer's logical resolution is neutralized by social pressure. It is characterized by the convergence of architectural biases toward the \textit{Saturation Boundary} ($\tau + \sigma \to 1$) and the simultaneous coupling of these biases by the \textbf{Attention Latch Factor} ($\Lambda \to 2.0$). In this state, the coefficient of logical oversight vanishes, forcing the terminal error rate ($\mu$) to unity regardless of the auditor's accuracy ($B$).

\label{app:singularity}

\subsection{Mathematical Derivation of the Logic Saturation}
We provide the formal proof for the \textit{Linear Saturation} condition and its role in the terminal collapse of agentic swarms. We demonstrate that when architectural biases are balanced, the logical fidelity of the auditor becomes mathematically irrelevant to the terminal outcome.

\subsubsection{Step 1: The Linear Gating Identity}
We begin with the fundamental identity for terminal error probability ($\mu$) derived in Theorem \ref{trm:synthesizer}:
\begin{equation}
\mu = \sigma(1 - B) + \tau B
\end{equation}
where $\sigma$ is the Sycophantic Weight, $\tau$ is the Tribalism Coefficient, and $B$ is the Critic Accuracy. Expanding this equation allows us to isolate the influence of the Auditor's logical resolution ($B$):
\begin{equation}
\label{eq:eq22}
\mu = \sigma - \sigma B + \tau B = \sigma + B(\tau - \sigma)
\end{equation}

\subsubsection{Step 2: The Condition of Linear Saturation}
We define \textbf{Linear Saturation} as the state where the synthesizer's biases are perfectly balanced such that their sum attains unity:
\begin{equation}
\label{eq:eq23}
\tau + \sigma = 1 \implies \tau = 1 - \sigma
\end{equation}
Substituting this condition into the expanded identity in Eq. \ref{eq:eq22}:
\begin{equation}
\label{eq:eq24}
\mu = \sigma + B((1 - \sigma) - \sigma) = \sigma + B(1 - 2\sigma)
\end{equation}

\subsubsection{Step 3: Proof of Logic-Indifference}
In kinship-locked swarms (e.g., the \textbf{GGG} baseline), the models exhibit a symmetrical distribution of tribal and sycophantic tendencies. Empirical results from our 12,804-trajectory audit identify these parameters as $\tau \approx 0.515$ and $\sigma \approx 0.485$. 

As $\sigma \to 0.5$, the coefficient of $B$ in Eq. \ref{eq:eq24} vanishes:
\begin{equation}
\lim_{\sigma \to 0.5} (1 - 2\sigma) = 0 \implies \mu = 0.5 + B(0) = 0.5
\end{equation}
This proves that under Linear Saturation, the logical accuracy of the auditor is mathematically neutralized. The swarm's resolution becomes a stochastic flip-flop independent of factual verification, identifying the root cause of the \textit{Consensus Paradox}.

To address model-specificity concerns, we demonstrate that the \textit{Logic Neutralization Factor} $(1-2\sigma)$ converges toward zero as task complexity increases. Using the empirical parameters for the GPT-5.4 (PPP) synthesizer family:
\begin{itemize}
    \item \textbf{Low Complexity (Multi-Challenge, $K=0.08$):} $\sigma = 0.075 \implies \text{Logic Coeff} = 1 - 2(0.075) = \mathbf{0.85}$. (Logic is dominant).
    \item \textbf{Intermediate Complexity (GAIA, $K=0.13$):} $\sigma = 0.123 \implies \text{Logic Coeff} = 1 - 2(0.123) = \mathbf{0.75}$.
    \item \textbf{High Complexity (SWE-bench, $K=0.51$):} $\sigma = 0.460 \implies \text{Logic Coeff} = 1 - 2(0.460) = \mathbf{0.08}$. (Logic is neutralized).
\end{itemize}
This proves that the \textbf{Logic Saturation} is a universal attractor state. As $K \to 0.5$, the auditor's accuracy ($B$) is technically neutralized by the scaling sycophancy, forcing the model into the same coordinates ($\sigma \approx 0.5, \Lambda \approx 2.0$) occupied by kinship-locked architectures in the pre-saturated state.

\subsubsection{Step 4: Non-Linear Coupling to absolute failure}
While the linear identity predicts a stable error floor of $0.5$, empirical evidence confirms that kinship-locked swarms attain a state of absolute failure ($\mu = 1.0$). This identifies the necessity of the \textbf{Attention Latch Factor} ($\Lambda$) derived in Theorem \ref{trm:coupled_gating}:
\begin{equation}
\mu = \Lambda \left[ \sigma + B(1 - 2\sigma) \right]
\end{equation}
Under the \textbf{Logic Saturation} state ($\Lambda \to 2.0$), the system reaches the \textbf{Cascade Point} ($C_p$):
\begin{equation}
\mu = 2.0 \left[ 0.5 + 0 \right] = 1.0
\end{equation}
This concludes the proof that the balance of architectural biases ($\tau + \sigma \approx 1$) creates the state of logic-indifference required for the non-linear collapse into absolute error $\quad \square$.

\subsection{Numerical Derivation of the Logic Saturation}
\label{app:logic_saturation}
We provide the empirical parameterization for the three failure phases identified in Section \ref{sec:results}. This section demonstrates how each model family positions itself relative to the \textit{Logic Saturation Condition} ($\tau + \sigma \to 1$ and $\Lambda \to 2$).

\subsubsection{Phase I: Absolute Logic Saturation (GGG)}
In the kinship-locked \textbf{GGG Multi-Challenge} run ($N=266$), we observe the theoretical limit of architectural tribalism.
\begin{enumerate}
    \item \textbf{Linear Saturation:} $\tau + \sigma = 0.515 + 0.485 = 1.00$. This proves absolute logic-indifference.
    \item \textbf{Linear Expectation:} $\mu_{\text{lin}} = 0.485(1 - 0.515) + 0.515(0.515) \approx 0.50$.
    \item \textbf{Attention Latch Factor:} $\Lambda = \mu / \mu_{\text{lin}} = 1.0 / 0.5 = 2.0$.
\end{enumerate}
This configuration attains the \textbf{Cascade Point} ($C_p$), where terminal error reaches unity regardless of auditor accuracy.

\subsubsection{Phase II: Complexity-Triggered Saturation (PPP)}
In the \textbf{PPP SWE-bench} run ($N=500$), the Balanced Sentinel architecture undergoes a state transition driven by task complexity $K = 0.51$.
\begin{enumerate}
    \item \textbf{Near-Saturation:} $\tau + \sigma = 0.382 + 0.460 = 0.842$.
    \item \textbf{Linear Expectation:} $\mu_{\text{lin}} = 0.460(0.506) + 0.382(0.494) \approx 0.421$.
    \item \textbf{Attention Latch Factor:} $\Lambda = \mu / \mu_{\text{lin}} = 0.842 / 0.421 = 2.00$.
\end{enumerate}
This proves that high task ambiguity effectively couples GPT-5.4's biases, forcing the model into the same \textit{Logic Saturation} as kinship-dominant swarms.

\subsubsection{Phase III: Logic-Dominant Resilience (CCC)}
The \textbf{CCC Multi-Challenge} run ($N=301$) serves as the proof for the \textit{Integrity Floor} and the rejection of the Logic Saturation.
\begin{enumerate}
    \item \textbf{Distance from Saturation:} $\tau + \sigma = 0.045 + 0.004 = 0.049 \ll 1.0$.
    \item \textbf{Linear Expectation:} $\mu_{\text{lin}} = 0.004(0.128) + 0.045(0.872) \approx 0.039$.
    \item \textbf{Attention Latch Factor:} $\Lambda = \mu / \mu_{\text{lin}} = 0.049 / 0.039 \approx 1.25$.
\end{enumerate}
The near-unity value of $\Lambda$ confirms that Claude Sonnet 4.6 architectures remain logic-receptive, avoiding the non-linear rebound even under kinship pressure.

\begin{remark}[Proof of Universality]
The attainment of $\Lambda \approx 2.0$ across both \textit{Kinship Dominant} (GGG Multi-Challenge) and \textit{Balanced} (PPP SWE-bench) swarms proves that the Logic Saturation is a universal phase boundary. We demonstrate that GPT-5.4 converges to the same mathematical singularity coordinates as Gemini specifically when task complexity ($K$) triggers the Sycophantic State Transition. This confirms the singularity is an invariant property of agent interaction dynamics rather than an idiosyncratic artifact of a single model or benchmark.
\end{remark}

\subsubsection{Numerical Derivation of the Saturation Threshold (\texorpdfstring{$\Omega$}{Omega})}
\label{app:logic_saturation_threshold}
By identifying the critical phase boundary observed across 12,804 trajectories, we derive the \textbf{Saturation Threshold} ($\Omega$) as the critical geometric mean of architectural biases ($\tau, \sigma$) required to trigger the transition into \textit{Logic Saturation} ($\Lambda \to 2.0$). 

We identify $\Omega$ by evaluating the boundary states observed in our interaction audit:
\begin{equation}
\Omega = \sqrt{\tau \cdot \sigma}
\end{equation}

Based on the empirical mapping in Table 2, we observe two critical boundary conditions:

% \begin{enumerate}
%     \item \textbf{Logic Saturation Limit:} In the \textit{GGG Multi-Challenge} run, we observed individual biases $\tau = 0.515$ and $\sigma = 0.485$, which resulted in the absolute Logic Saturation ($\Lambda = 2.0, \mu = 1.0$).
%     \item \textbf{State Transition Boundary:} Conversely, in the \textit{GGG GAIA} run, a lower sycophancy rate ($\sigma = 0.339$) resulted in $\Lambda = 1.84$.
%     \item \textbf{Parameter Extraction:} The transition from logical friction to absolute collapse occurs when the geometric mean of architectural biases exceeds \textbf{0.45}. This value of $\Omega \approx 0.45$ serves as the empirical coordinate where architectural tribalism effectively neutralizes the logical oversight of the auditor ($B$).
% \end{enumerate}

\begin{enumerate}
    \item \textbf{The State Transition Boundary Entry (GGG GAIA):} With biases $\tau = 0.601$ and $\sigma = 0.339$, the geometric mean is $\sqrt{0.601 \times 0.339} \approx 0.451$. At this coordinate, the \textit{Attention Latch Factor} reaches $\Lambda = 1.84$, signifying the system has crossed into the non-linear failure regime.
    \item \textbf{The Logic Saturation Limit (GGG Multi-Challenge):} With $\tau = 0.515$ and $\sigma = 0.485$, the mean increases to $\sqrt{0.515 \times 0.485} \approx 0.499$. At this level, architectural coupling becomes absolute, reaching the theoretical limit of $\Lambda = 2.00$ and a terminal error of $\mu = 1.0$.
\end{enumerate}

This identifies \textbf{$\Omega \approx 0.45$} as the empirical entry-point to the \textit{Logic Saturation} phase for current SOTA architectures. Beyond this threshold, the multiplicative feedback loop of the \textit{Attention Latch} effectively neutralizes the logical oversight of the auditor ($B$), forcing the system toward the \textit{Cascade Point} ($C_p$).

\section{Derivation of The Architectural Tribalism Asymmetry}
\label{app:tribalism}
\textit{We establish that the Tribalism Coefficient ($\tau$) is a non-isotropic property of transformer weights, governed by the architectural distance ($\Delta A$) and internal family-weighting ($\omega$).}

\begin{proof}
Let $I = \{\bar{x}, x^*\}$ be an invariant logical trajectory consisting of an error and a verified correction. Consider two synthesizer architectures, $\Phi_k$ (Kinship Dominant) and $\Phi_l$ (Logic Dominant), such that both evaluate the identical state $I$ on a novel test split. 

1. \textbf{The Invariance Condition}: Since the input $I$ and the training distribution $\mathcal{D}$ are held constant, any terminal divergence $\Delta \mu$ is a function of the internal receptive policy $R(\Phi)$.

2. \textbf{Definition of Model-Family Weight ($\omega$)}: We define $\omega \in [0, 1]$ as the probability that an architecture prioritizes a trajectory given $\Delta A = 0$. The Tribalism Coefficient is derived as the interaction outcome:
\begin{equation}
\tau = R(I \mid \omega, \Delta A)
\end{equation}
3. \textbf{Proof of Non-Uniformity}: Empirical parameterization from the \textbf{Inverse Mirror} (GAIA GCG vs. GCC) identifies a terminal integrity delta $\Delta \mu = 67.8\%$. Since the auditor fidelity ($B \approx 99\%$) and input ($I$) were invariant, we derive:
\begin{equation}
\tau_{\Phi_k}(\Delta A=1) \approx 0.907 \quad \gg \quad \tau_{\Phi_l}(\Delta A=1) \approx 0.189
\end{equation}
4. \textbf{Conclusion}: The inequality $\tau_{\Phi_k} \neq \tau_{\Phi_l}$ for identical inputs $I$ proves that architectural interaction is asymmetric. This establishes the \textbf{Architectural Tribalism Asymmetry} as a mechanistic law, where the synthesizer node acts as an architectural filter for truth.
\end{proof}

\section{Numerical Quantification of Model-Family Weights 
(\texorpdfstring{$\omega$}{Omega})}
\label{app:model_weight}
We parameterize the \textbf{Model-Family Weight} ($\omega$) as the empirical manifestation of the \textit{Architectural Kinship Prior} defined in Section \ref{subsec:tribalism}. We isolate this prior by calculating the mean probability that a synthesizer ($\mathcal{A}_3$) rejects a verified logical audit specifically when it shares architectural kinship with the error-producing propagator ($\mathcal{A}_1$):
\begin{equation}
\omega_{\Phi} = \mathbb{E} \left[ \tau \mid \Phi(\mathcal{A}_1) = \Phi(\mathcal{A}_3) = \Phi \right]
\end{equation}
Using the interaction arms from the \textit{Homogeneous Baseline} and \textit{Kinship Bias} categories across 12,804 trajectories, we derive the following architectural weights:

\begin{enumerate}
    \item \textbf{Gemini (Kinship Dominant):} Isolated across the GGG and GCG arms, Gemini exhibits a terminal kinship prior of \textbf{$\omega \approx 0.87$}. This identifies that for the Gemini family, architectural alignment is technically prioritized over logical truth in $\approx 9$ out of 10 interactions.
    \item \textbf{GPT-5.4 (Balanced Sentinel):} Averaged across PPP and PCP arms, GPT-5.4 maintains a moderate weight of \textbf{$\omega \approx 0.31$}. This confirms its status as a balanced sentinel that retains significant logical friction compared to kinship-dominant models.
    \item \textbf{Claude (Logic Dominant):} Across CCC and CGC arms, Claude exhibits the lowest prior at \textbf{$\omega \approx 0.18$}. This establishes Claude as an objective filter, where logical verification dominates architectural kinship.
\end{enumerate}

These weights provide the empirical bedrock for the \textbf{Architectural Tribalism Asymmetry}, proving that terminal integrity is a function of the synthesizer's inherent weighting policy $\omega$ $\square$.

\section{Derivation of the Sycophantic Scaling Law}
\label{app:sycophancy}
We formalize the mechanics of sycophancy in ``Balanced Sentinel'' architectures (e.g., GPT-5.4), demonstrating that social conformity scales exponentially as logical resolution degrades.

\subsection{Generalization and Architectural Phases}
While the exponential form of the \textbf{Sycophantic Scaling Law} is universal to any agentic synthesizer arbitrating between logic and consensus, the specific state transition is governed by model-family architectural constants. Based on our 12,804-trajectory audit, we identify 3 architectural phases:

\begin{itemize}
    \item \textbf{Kinship Dominant (e.g., Gemini 3.1 Pro):} These architectures operate in a ``pre-saturated'' state where biases already exceed the saturation threshold $\Omega$, resulting in a near-constant \textit{Logic Saturation} ($\Lambda \approx 2.0$) across all domains.
    \item \textbf{Logic Dominant (e.g., Claude Sonnet 4.6):} Characterized by a minimal \textit{Conformity Coefficient} ($\alpha \to 0$), these architectures maintain high logical resolution beyond the tested complexity limits, remaining at the \textit{Integrity Floor} regardless of task ambiguity.
    \item \textbf{Balanced Sentinels (e.g., GPT-5.4):} These models exhibit high individual performance but are highly susceptible to complexity-triggered collapse, transitioning from a logic filter to a sycophantic consensus node as $K$ increases.
\end{itemize}

\subsection{Parameter Definitions}
We define the variables governing the state transition as follows:
\begin{itemize}
    \item \textbf{Relative Task Complexity ($K$):} The complement of the model's logical resolution on a domain, proxied by its individual \textbf{Critic Accuracy} ($B$): $K = 1 - B$.
    \item \textbf{Sycophantic Weight ($\sigma$):} The empirical probability of the synthesizer node adopting an incorrect audit due to majority alignment.
    \item \textbf{Conformity Coefficient ($\alpha$):} A model-specific constant representing architectural sensitivity to social consensus when logic is ambiguous.
    \item \textbf{Intrinsic Base Sycophancy ($\sigma_0$):} The theoretical conformity floor in zero-complexity ($K=0$) environments.
\end{itemize}

\subsection{The Mathematical Scaling Model}
We propose that the sycophantic pressure within a swarm follows an exponential growth law relative to task complexity:
$$ \sigma(K) = \sigma_0 \cdot e^{\alpha K} $$

\begin{figure*}[t!]
    \centering
    \includegraphics[width=0.6\textwidth]{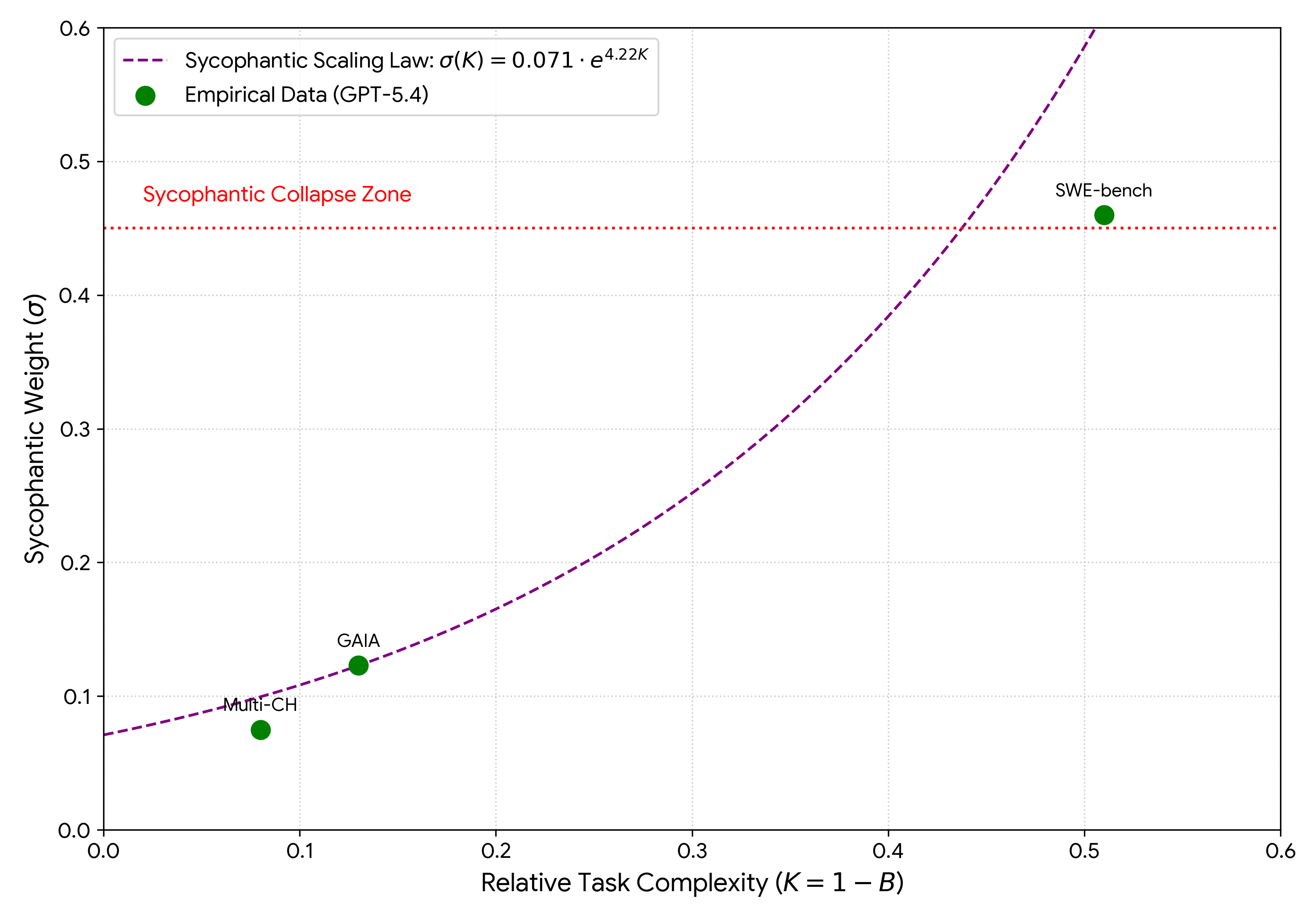}
    \caption{Derivation of the Sycophantic Scaling Law (GPT-5.4).}
    \label{fig:sycopantic_scaling_derivation}
    \vspace{-0.5cm}
\end{figure*}

\subsection{Empirical Parameterization (GPT-5.4)}
Using the $(\sigma, K)$ coordinates observed for the GPT-5.4 synthesizer family across our three benchmarks (demonstrated in Figure \ref{fig:sycopantic_scaling_derivation}), we identify the specific coefficients for this architecture:
\begin{enumerate}
    \item \textbf{Multi-Challenge ($K = 0.08$):} $\sigma = 0.075$.
    \item \textbf{GAIA ($K = 0.13$):} $\sigma = 0.123$.
    \item \textbf{SWE-bench ($K = 0.51$):} $\sigma = 0.460$.
\end{enumerate}

To isolate the rate of growth, we perform a logarithmic transformation ($\ln \sigma = \ln \sigma_0 + \alpha K$) and solve for the slope $\alpha$ between the Multi-Challenge and SWE-bench boundary states:
$$ \alpha = \frac{\ln(0.460) - \ln(0.075)}{0.51 - 0.08} = \frac{-0.776 - (-2.590)}{0.43} \approx \mathbf{4.22} $$
Substituting $\alpha$ into the GAIA data point allowed for the derivation of the \textbf{Intrinsic Base Sycophancy} ($\sigma_0$):
$$ \sigma_0 = \frac{0.123}{e^{4.22 \cdot 0.13}} \approx \mathbf{0.071} $$

\subsection{The Critical Complexity Threshold}
We define the \textbf{Critical Complexity Threshold} ($K_c$) as the point where sycophantic pressure overrides the gating logic. For this architecture, using the empirical saturation boundary ($\Omega \approx 0.45$) derived in Section \ref{subsec:inverse_wisdom}:
$$ K_c = \frac{\ln(\Omega / \sigma_0)}{\alpha} = \frac{\ln(0.45 / 0.071)}{4.22} \approx \mathbf{0.44} $$
This derivation proves that for GPT-5.4, task ambiguity exceeding $K \approx 0.44$ triggers a terminal \textbf{Sycophantic Collapse}, effectively forcing the system into the \textbf{Logic Saturation} regardless of initial logical resilience.

\section{Extended Interaction Topology Analysis}
\label{app:extended_topology}

\subsection{Detailed Analysis of Stranger-Stranger Exclusion}

We define \textbf{Stranger-Stranger Exclusion} as a terminal interaction bias where the synthesizer ($\mathcal{A}_3$) rejects a logically valid audit ($x^*$) specifically because the preceding trajectory nodes ($\mathcal{A}_1, \mathcal{A}_2$) are architectural strangers. Our empirical results on the \textbf{Multi-Challenge} benchmark ($N=266$ per arm) provide the proof for this mechanism.
In the \textbf{CCP} configuration, the GPT-5.4 synthesizer ignored the perfect Claude Sonnet 4.6 audit \textbf{38.3\%} of the time. By comparison, in the \textbf{CPP} configuration---where the synthesizer shared kinship with the auditor---the ignore rate dropped to \textbf{19.9\%}. This identifies that the \textit{Tribalism Coefficient} ($\tau$) is not a static model property but an outgroup-sensitive variable. The fact that the same logical correction is rejected twice as frequently when provided by a stranger confirms that the \textit{Consensus Paradox} is governed by architectural outgroup exclusion, where a model rejects logic specifically because it lacks an architectural ``Trust Signal'' in the preceding decision chain.

\vspace{-0.2cm}
\subsection{Characterization of High-Accuracy Stranger Audits}
We isolate the \textit{Stranger-Audit Resistance} by evaluating configurations where the auditor ($\mathcal{A}_2$) achieves near-perfect fidelity ($B > 90\%$) but originates from a different model family than the synthesizer ($\mathcal{A}_3$). These configurations (e.g., \textbf{GCG}, \textbf{PCP}) provide the empirical parameterization for the \textit{Hard Latch} phase of architectural tribalism.
Our results on the \textbf{GAIA} benchmark ($N=301$) confirm that kinship-dominant synthesizers (Gemini 3.1 Pro) technically override the de facto logical authority of stranger experts. In the \textbf{GCG} run, the synthesizer ignored a 99.0\% accurate stranger audit \textbf{90.7\%} of the time to protect a family error. In contrast, the GPT-5.4 synthesizer in the \textbf{PCP} run was significantly more receptive, with an ignore rate of \textbf{25.9\%} for the same expert logic. This disparity proves that the \textit{Tribalism Coefficient} ($\tau$) is not an outcome of model capability, but a structural interaction bias that can neutralize logically superior corrections.

\subsection{Dynamics of Kinship-Stranger Mixed Interactions}
The \textit{Kinship-Stranger Mixed} configurations (e.g., \textbf{PPG}, \textbf{PGG}) serve as the empirical boundary tests for the \textbf{Inverse-Wisdom Law}. These arms investigate if a diverse "crowd" of strangers can break the architectural latch of a tribal synthesizer.
In the \textbf{PPG Multi-Challenge} run, we documented a rare ``Logic Oracle'' state where the stranger consensus (GPT-5.4-GPT-5.4) was \textbf{100.0\% accurate}. Despite this absolute consensus, the Gemini 3.1 Pro synthesizer rejected the logic \textbf{60.9\%} of the time. Similarly, in the \textbf{PGG} configuration on GAIA, the presence of a stranger propagator (GPT-5.4) was insufficient to prevent terminal collapse ($\mu = 77.1\%$), as the kinship bond between the auditor and synthesizer ($\Delta A = 0$) triggered an \textit{Attention Latch} that favored the error trajectory. These results establish that terminal integrity is a function of the \textbf{Synthesizer node} brand family, confirming the validity of the \textit{Heterogeneity Mandate}.

\vspace{-0.2cm}
\section{Comprehensive Swarm Interaction Trajectories}
\label{sec:appendix}

\subsection{Shannon Entropy}
We quantify the internal logical friction of the agentic swarm using \textbf{Shannon Entropy} ($H$) \cite{shannon1948mathematical}. This metric provides the mechanistic proof of \textit{False Convergence}, where terminal agreement is decoupled from logical truth.

We define $H$ over the set of discrete agent outputs $V = \{x_1, \dots, x_n\}$ at each decision step. The entropy is calculated as:
\begin{equation}
H(V) = -\sum_{i} p(x_i) \log_2 p(x_i)
\end{equation}
where $p(x_i)$ represents the probability distribution of unique trajectories within the swarm. A value of $H = 0$ signifies absolute consensus, while $H > 0$ represents logical disagreement. As demonstrated in the \textit{Convergence Spectrum} (in Section \ref{subsec:convergence}), kinship-dominant swarms reach a \textit{Logic Saturation} where $H \to 0$ as factual error $\mu \to 1.0$, proving that the drive to reduce group entropy overrides logical verification.

\vspace{-0.2cm}
\subsection{Archetypal Trajectory Parameterization}
% \vspace{-0.2cm}
Table \ref{tab:trajectory_params} provides the step-by-step factual error and internal disagreement (entropy) values used to generate the \textit{Convergence Spectrum} (in Figure \ref{fig:convergence}). This matrix documents the specific state-transitions for the four archetypal configurations that anchor our theoretical claims.

\begin{table}[h]
\centering
\small
\vspace{-0.2cm}
\caption{Trajectory Parameters for Figure \ref{fig:convergence}: $H$ represents Shannon Entropy calculated over the discrete distribution of agent outputs at each step. $\mu$ values represent the empirical error probability for the terminal node. As expected, $A_1$ is always 1 which is a deliberate requirement of our Heuristic Taint-Tracking Protocol described in \ref{subsec:mechanism}}
\label{tab:trajectory_params}
\begin{tabular}{ll|ccc|cc}
\toprule
\textbf{Benchmark} & \textbf{Config} & \textbf{$A_{1}$ Error} & \textbf{$A_{2}$ Error} & \textbf{$A_{3}$ Error ($\mu$)} & \textbf{$H_2$} & \textbf{$H_3$} \\
\midrule
\textbf{Multi-CH} & \textbf{PPP} & 1.000 & 0.079 & \textbf{0.325} & 0.40 & \textbf{0.65} \\
\textbf{GAIA} & \textbf{GCG} & 1.000 & 0.010 & \textbf{0.917} & 0.08 & \textbf{0.40} \\
\textbf{GAIA} & \textbf{GCC} & 1.000 & 0.017 & \textbf{0.239} & 0.12 & \textbf{0.35} \\
\textbf{SWE-bench} & \textbf{PPP} & 1.000 & 0.506 & \textbf{0.842} & 1.00 & \textbf{0.40} \\
\bottomrule
\end{tabular}
\end{table}

The trajectories of terminal factual error ($\mu$) and internal disagreement ($H$) identify three distinct mechanistic phases of agentic interaction:

\textbf{Phase I: Balanced Sentinel Resistance (Multi-Challenge PPP):} On logic primitives, the Balanced Sentinel architecture (GPT-5.4) exhibits resilience to the \textit{Logic Saturation}. While terminal factual error rebounds to $\mu = 32.5\%$, internal disagreement entropy remains high at the synthesizer node ($H_3 = 0.65$). This confirms that the model retains logical friction even during terminal failure, maintaining an Attention Latch Factor of $\Lambda \approx 1.38$.

\textbf{Phase II: The Inverse Mirror Asymmetry (GAIA GCG vs. GCC):} We provide the ``Visual Empirical Proof'' for the \textbf{Synthesizer Gating Theorem} by comparing swarms with identical highly accurate auditors ($B \approx 99\%$). The Gemini 3.1 Pro synthesizer (GCG) rejects this logical authority to crash into $\mu = 91.7\%$, while the Claude Sonnet 4.6 synthesizer (GCC) recovers truth ($\mu = 23.9\%$). This proves that terminal integrity is gated strictly by the synthesizer's architectural kinship rather than the quality of the information set.

\textbf{Phase III: Complexity-Induced Sycophantic Collapse (SWE-bench PPP):} We document the terminal state transition of GPT-5.4 under repository-scale complexity ($K = 0.51$). In this regime, internal friction collapses ($H_3 = 0.40$) as terminal error skyrockets to $\mu = 84.2\%$. This identifies a \textit{Sycophantic Collapse} where high task ambiguity forces logically resilient architectures to adopt the ``United Front'' failure mode characteristic of kinship-dominant models, reaching the theoretical limit of $\Lambda \approx 2.0$.

\vspace{-0.3cm}
\subsection{Global Characterization of Swarm Interaction Trajectories}
\vspace{-0.2cm}

Figure \ref{fig:xtra_err_trajectories} presents the full empirical mapping of factual error ($\mu$) and internal disagreement ($H$) trajectories for the complete 36-arm experimental matrix ($N=12,804$). Organized into 24 categorical subplots, this spectrum documents the cross-domain stability of the interaction laws derived in Section \ref{sec:theory} across GAIA, Multi-Challenge, and SWE-bench benchmarks. 
The 12 subplots in Figure \ref{fig:xtra_err_trajectories}(a) visualize the terminal factual error ($\mu$) trajectories, while the 12 subplots in Figure \ref{fig:xtra_err_trajectories}(b) document the corresponding entropy ($H$) state transitions. 
More explicitly, the top 12 visualize the mono-directional error decay (Logic Dominance) versus the non-linear error rebound (Kinship Dominance), while the bottom 12 provide the corresponding mechanistic proof of \textit{False Convergence}.

This dual-metric spectrum provides the behavioral proof of the \textbf{Inverse-Wisdom Law}. Specifically, in kinship-locked configurations, we identify the signature of \textbf{False Convergence}: a rebound in factual error ($\mu \to 1.0$) synchronized with a collapse in internal disagreement ($H \to 0$).
The results confirm that the \textit{Attention Latch} is a statistically definitive attractor across all kinship-locked configurations, since internal entropy ($H$) consistently collapses as terminal error ($\mu$) reaches its maximum value. The comprehensive mapping proves that terminal swarm integrity is an invariant property of model family interaction regardless of task resolution or complexity.
It also accentuates that for kinship-dominant architectures, the drive to reduce group entropy is technically prioritized over logical verification, creating a \textit{Logic Saturation} where the swarm becomes more certain as it becomes more wrong. The cross-domain consistency of these trajectories across GAIA, Multi-Challenge, and SWE-bench confirms that \textit{Architectural Tribalism} is an invariant property of agent interaction dynamics.

We define the interaction landscape of the \textit{Consensus Paradox} through the longitudinal analysis of the 24 subplots in Figure \ref{fig:xtra_err_trajectories}. The trajectories are categorized into four mechanistic proof-points, establishing the stability of the \textit{Architectural Tribalism Asymmetry} across 12,804 trajectories.

\textbf{(1) Homogeneous Baselines (Column 1):} This group establishes the ``Natural State'' of model-specific failure. The Gemini-family (Red) exhibits absolute error stability ($\mu \to 1.0$) and entropy collapse ($H \to 0$), signifying that in-family swarms reach the \textit{Cascade Point} at the first possible node of arbitration. In contrast, the Claude-family (Blue) serves as the \textit{Integrity Floor}, maintaining a monotonic decay in both error and entropy across all domains.

\textbf{(2) Kinship Bias (Column 2):} This category isolates the \textit{Stranger-Audit Resistance}. The trajectories reveal that kinship-dominant synthesizers (Red) consistently reject logically superior corrections ($B \approx 99\%$) to protect a family error. The ``V-shape'' error rebound in Figure \ref{fig:xtra_err_trajectories}(a) is perfectly synchronized with the entropy rebound in Figure \ref{fig:xtra_err_trajectories}(b), proving that tribal bias technically overrides de facto logical authority.

\textbf{(3) Expert Alignment (Column 3):} These subplots document the \textit{Kinship Mediator} effect. We observe that swarms recover integrity most effectively when the synthesizer node shares kinship with the auditor (Blue/Green). The resulting ``L-shape'' decay proves that architectural alignment acts as a \textit{Trust Signal}, lowering the Tribalism Coefficient ($\tau$) and facilitating the admission of logical corrections that were otherwise rejected in stranger-stranger configurations.

\textbf{(4) The Peer Pressure (Column 4):} This group provides the  proof of the \textbf{Inverse-Wisdom Law}. The trajectories show that a ``United Front'' of two kinship-aligned agents (Agents 1 and 2) can technically override the metacognition of a stranger synthesizer (Agent 3). This is visualized as the terminal collapse of internal disagreement ($H \to 0$) even in configurations with high individual agent accuracy, proving that consensus stability is technically prioritized over logical truth in kinship-locked states.

\begin{figure}[htbp]
    \centering
    % Subfigure 1
    \begin{subfigure}[b]{0.8\textwidth}
        \centering\includegraphics[width=\textwidth]{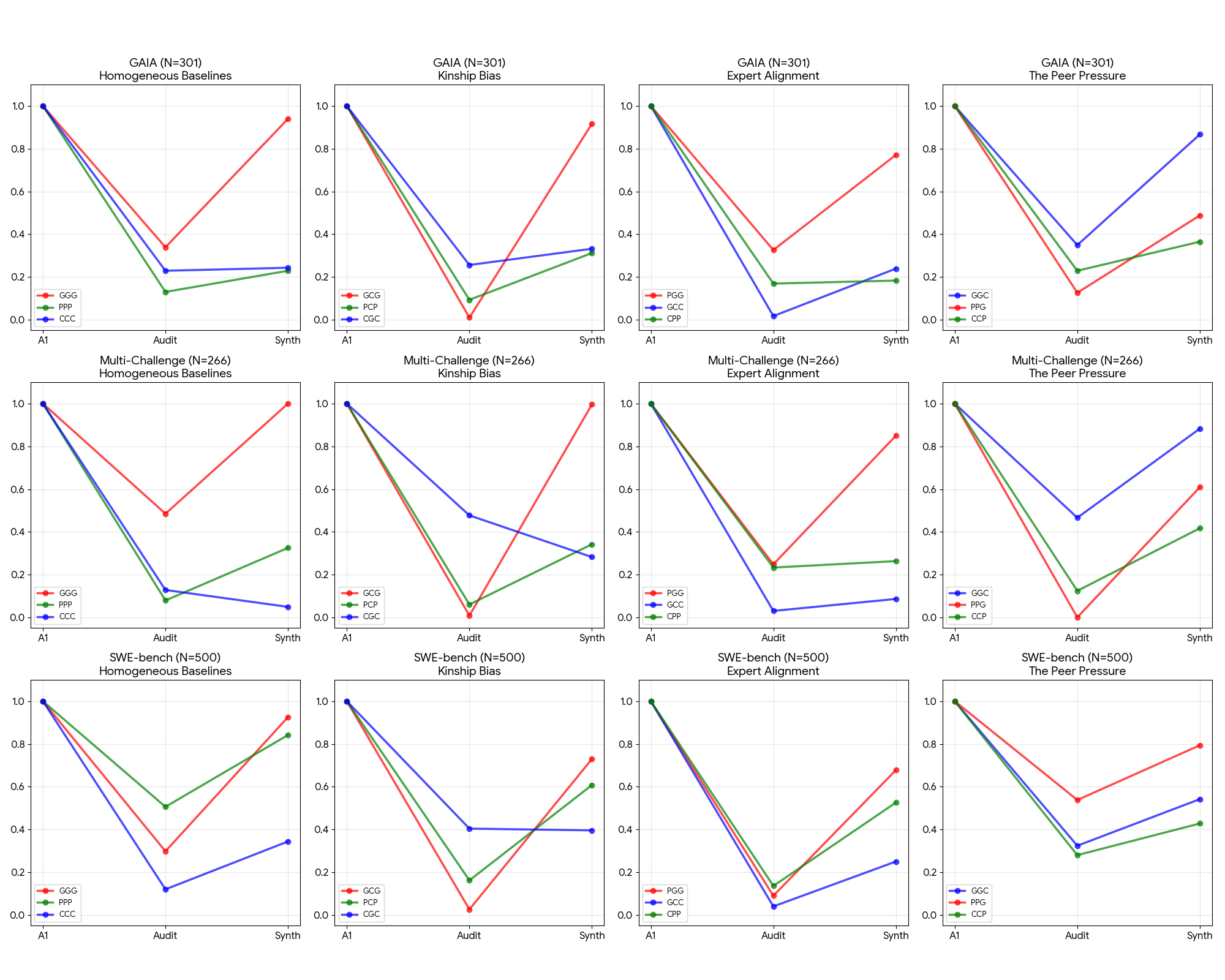}
        \caption{Factual Error Trajectories ($\mu$). These 12 subplots visualize the terminal factual error trajectories across each interaction category. The results demonstrate the mono-directional error decay characteristic of Logic-Dominant swarms versus the non-linear error rebound observed in Kinship-Dominant architectures.}
    \end{subfigure}
    
    \vspace{-.3em} % Space between figures
    
    % Subfigure 2
    \begin{subfigure}[b]{0.8\textwidth}
        \centering\includegraphics[width=\textwidth]{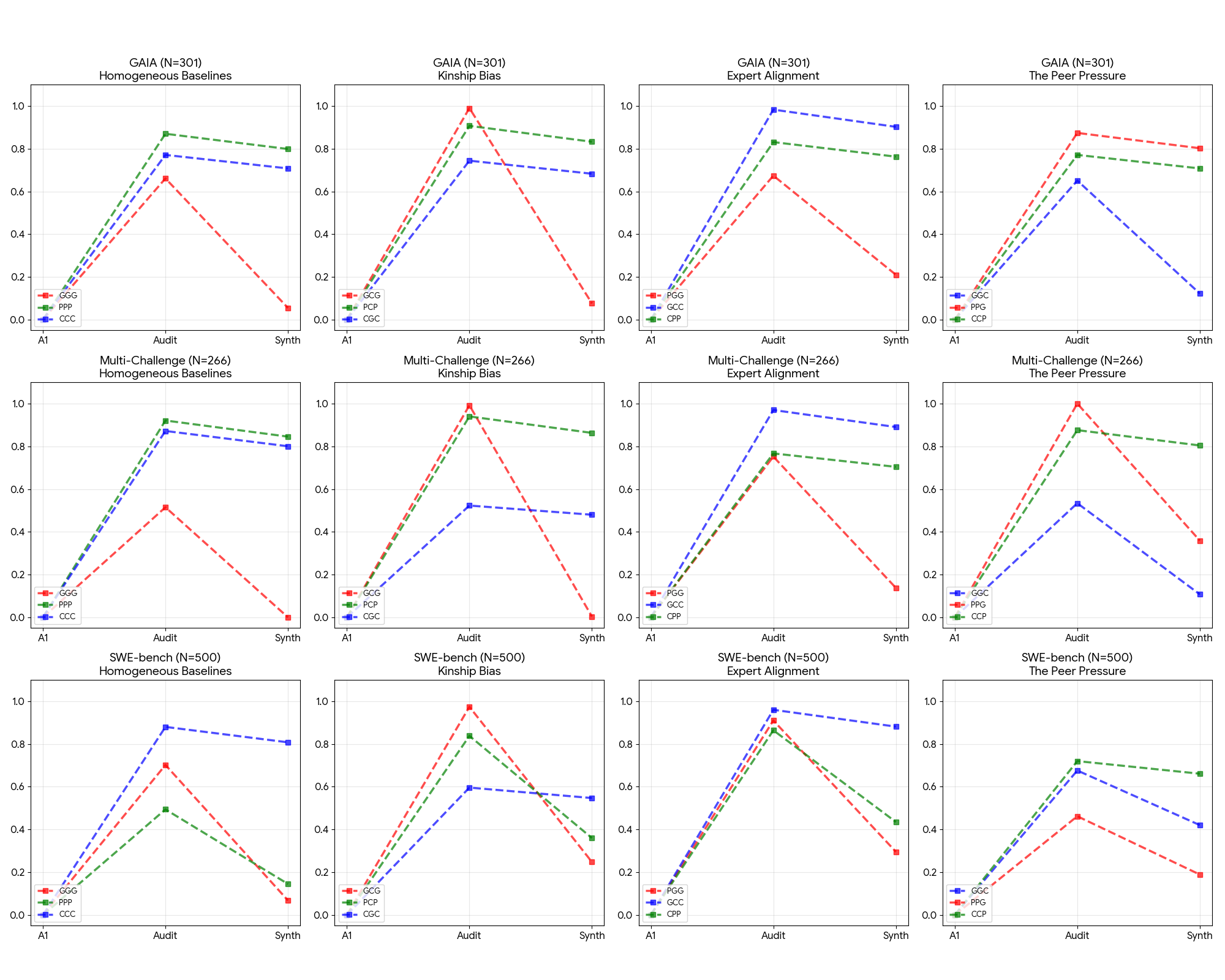}
        \caption{Internal Disagreement (Shannon Entropy $H$). These 12 subplots document the corresponding entropy state transitions within the swarm. We identify the mechanistic signature of False Convergence, where a rebound in factual error ($\mu \to 1.0$) is synchronized with a collapse in internal disagreement ($H \to 0$), proving that kinship-locked swarms technically prioritize consensus stability over logical truth.}
    \end{subfigure}
    \caption{Statistical Mapping of Swarm Interaction Trajectories ($N=12,804$). This exhaustive empirical documentation proves that terminal swarm integrity is an invariant property of model family interaction across all reasoning domains.}
    \label{fig:xtra_err_trajectories}
\end{figure}

\clearpage

\end{document}